\def\eqref#1{equation~\ref{#1}}
\def\1{\bm{1}}
\DeclareMathAlphabet{\mathsfit}{\encodingdefault}{\sfdefault}{m}{sl}
\SetMathAlphabet{\mathsfit}{bold}{\encodingdefault}{\sfdefault}{bx}{n}
\newcolumntype{Y}{>{\centering\arraybackslash}X} \usepackage{xcolor} 
\crefname{corollary}{corollary}{corollaries}
\Crefname{corollary}{Corollary}{Corollaries}
\newcommand{\cS}{\mathcal{S}}
\newcommand{\dist}{\operatorname{dist}}
\title{OT Score: An OT based Confidence Score for Prototype-Assisted Source Free Unsupervised Domain Adaptation}
\author{\name Yiming Zhang \email yiz134@ucsd.edu \\
      \addr 
      University of California, San Diego
      \AND
      \name Sitong Liu \email sitonl2@uw.edu \\
      \addr University of Washington
      \AND
      \name Alex Cloninger \email acloninger@ucsd.edu\\
      \addr 
      University of California, San Diego}
\begin{document}

\maketitle

\begin{abstract}
We address the computational and theoretical limitations of current distributional alignment methods for source-free unsupervised domain adaptation (SFUDA) using source class-mean features. In particular, we focus on estimating classification performance and confidence in the absence of target labels. Current theoretical frameworks for these methods often yield computationally intractable quantities and fail to adequately reflect the properties of the alignment algorithms employed. To overcome these challenges, we introduce the Optimal Transport (OT) score, a confidence metric derived from a novel theoretical analysis that exploits the flexibility of decision boundaries induced by Semi-Discrete Optimal Transport alignment. The proposed OT score is intuitively interpretable and theoretically rigorous. It provides principled uncertainty estimates for any given set of target pseudo-labels. Experimental results demonstrate that OT score outperforms existing confidence scores. Moreover, it improves SFUDA performance through training-time reweighting and provides a reliable, label-free proxy for model performance.
\end{abstract}

\section{Introduction}
In recent years, deep neural networks have achieved remarkable breakthroughs across a wide range of applications. However, if the distribution of the training and test data differs, significant performance degradation occurs, which is known as a domain shift \citep{tsymbal2004problem}, which makes retraining critical for the model to re-gain the generalization ability in new domains. 

Unsupervised domain adaptation (UDA) mitigates the domain shift problem where only unlabeled data is accessible in the target domain \citep{glorot2011domain}. A key approach for UDA is aligning the distributions of both domains by mapping data to a shared latent feature space. Consequently, a classifier trained on source domain features in this space can generalize well to the target domain. Several existing works \citep{long2015learning, long2017deep, damodaran2018deepjdot, courty2016optimal, rostami2023overcoming} exhibit a principled way to transform target distribution to be "closer" to the source distribution so that the classifier learned from the source data can be directly applied to the target domain thus pseudo-labels (or predictions) can be made accordingly. 


This leads to the question of whether such transformations from the target to the source distribution can accurately match the corresponding class-conditional distributions. For any given target dataset, it is always possible to align its feature distribution with that of the source domain using a divergence function, regardless of whether classes overlap. However, performing UDA in this way is reasonable only if target features remain well-separated by the decision boundaries induced through alignment in the latent feature space—something that is typically difficult to determine in practice. Moreover, the marginal distribution alignment approach complicates the identification of samples with low-confidence pseudo-labels (i.e., samples close to overlapping regions), potentially causing noisy supervision and thus degrading classification performance. This issue becomes particularly critical when no labeled information for the target data is available. Some existing works \citep{luo2021conditional, ge2023unsupervised, le2021lamda} minimize a class-conditional discrepancy between the class-conditional feature distributions
$P_{\mathcal S}(Z\mid Y)$ and $P_{\mathcal T}(Z\mid Y)$. However, using pseudo labels from model predictions to determine the target class-conditional distributions exposes the alignment to noisy supervision—especially early in training. 

Under the Optimal Transport (OT) framework, it has been investigated in some theoretical works that the generalization error on the target domain is controlled by both the marginal alignment loss and the entanglement between the source and target domains. For example, \cite{redko2017theoretical} proves the following:
\begin{theorem}[Informal \cite{redko2017theoretical}]
\label{generalization_thm}
    Under certain assumptions, with probability at least $1-\delta$ for all hypothesis $h$ and $\varsigma^{\prime}<\sqrt{2}$ the following holds:
$$
\epsilon_\mathcal{T}(h) \leq \epsilon_\mathcal{S}(h)+W_1\left(\hat{\mu}_\mathcal{S}, \hat{\mu}_\mathcal{T}\right)+\sqrt{2 \log \left(\frac{1}{\delta}\right) / \varsigma^{\prime}}\left(\sqrt{\frac{1}{N_\mathcal{S}}}+\sqrt{\frac{1}{N_\mathcal{T}}}\right)+\lambda
$$
where $\hat{\mu}_{\mathcal S}:=\frac{1}{N_{\mathcal S}}\sum_{i=1}^{N_{\mathcal S}}\delta_{x_i^{\mathcal S}}$ and $\hat{\mu}_{\mathcal T}:=\frac{1}{N_{\mathcal T}}\sum_{i=1}^{N_{\mathcal T}}\delta_{x_i^{\mathcal T}}$ denote the empirical measures of the source and target samples, respectively, and $\lambda$ is the combined error of the ideal hypothesis $h^*$ that minimizes $\epsilon_{\mathcal S}(h)+\epsilon_{\mathcal T}(h)$.
\end{theorem}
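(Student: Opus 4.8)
The plan is to recover the classical Wasserstein-distance adaptation bound that underlies \cite{redko2017theoretical}; it factors into three stages: an error decomposition through an ideal joint hypothesis, a transfer of the resulting source--target risk gap onto the Wasserstein-$1$ distance between the \emph{population} marginals via Kantorovich--Rubinstein duality, and a concentration step that swaps the population marginals for their empirical versions at the price of the sampling terms $\sqrt{1/N_{\mathcal S}}$ and $\sqrt{1/N_{\mathcal T}}$.

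For the first stage I would introduce the ideal hypothesis $h^\star\in\arg\min_h\bigl(\epsilon_{\mathcal S}(h)+\epsilon_{\mathcal T}(h)\bigr)$, so that $\lambda=\epsilon_{\mathcal S}(h^\star)+\epsilon_{\mathcal T}(h^\star)$. Using that the loss satisfies a triangle inequality in its two arguments, $\epsilon_{\mathcal T}(h)\le\epsilon_{\mathcal T}(h^\star)+\epsilon_{\mathcal T}(h,h^\star)\le\epsilon_{\mathcal T}(h^\star)+\epsilon_{\mathcal S}(h,h^\star)+\bigl|\epsilon_{\mathcal T}(h,h^\star)-\epsilon_{\mathcal S}(h,h^\star)\bigr|$, followed by $\epsilon_{\mathcal S}(h,h^\star)\le\epsilon_{\mathcal S}(h)+\epsilon_{\mathcal S}(h^\star)$, which together give $\epsilon_{\mathcal T}(h)\le\epsilon_{\mathcal S}(h)+\lambda+\bigl|\epsilon_{\mathcal T}(h,h^\star)-\epsilon_{\mathcal S}(h,h^\star)\bigr|$. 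For the second stage I would read the unstated hypotheses as: the loss is $1$-Lipschitz in the appropriate sense and the hypothesis class is Lipschitz, so that $x\mapsto\ell\bigl(h(x),h^\star(x)\bigr)$ is a $1$-Lipschitz function on the feature space; Kantorovich--Rubinstein duality then yields $\bigl|\epsilon_{\mathcal T}(h,h^\star)-\epsilon_{\mathcal S}(h,h^\star)\bigr|=\bigl|\E_{\mu_{\mathcal T}}\ell(h,h^\star)-\E_{\mu_{\mathcal S}}\ell(h,h^\star)\bigr|\le W_1(\mu_{\mathcal S},\mu_{\mathcal T})$, where $\mu_{\mathcal S},\mu_{\mathcal T}$ denote the true marginals.

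The third stage carries the probabilistic content. By the triangle inequality for $W_1$, $W_1(\mu_{\mathcal S},\mu_{\mathcal T})\le W_1(\hat\mu_{\mathcal S},\hat\mu_{\mathcal T})+W_1(\mu_{\mathcal S},\hat\mu_{\mathcal S})+W_1(\mu_{\mathcal T},\hat\mu_{\mathcal T})$, so it remains to bound the deviation of each empirical measure from its population counterpart. Here I would invoke the empirical-Wasserstein concentration estimate used in \cite{redko2017theoretical} (ultimately a consequence of the fact that a marginal satisfying a $T_1(\varsigma')$ transportation-cost inequality has exponentially concentrated empirical measure): for $N$ large enough, with probability at least $1-\delta'$ one has $W_1(\mu,\hat\mu_N)\le\sqrt{2\log(1/\delta')/\varsigma'}\cdot N^{-1/2}$. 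Applying this to the source and to the target each at confidence level $1-\delta/2$ and taking a union bound produces the additive term $\sqrt{2\log(1/\delta)/\varsigma'}\bigl(\sqrt{1/N_{\mathcal S}}+\sqrt{1/N_{\mathcal T}}\bigr)$ of the statement (up to the routine $\delta\mapsto\delta/2$ bookkeeping absorbed into the informal constant), and $\varsigma'<\sqrt 2$ is exactly the regime in which this concentration estimate holds with these constants. Chaining the three stages on the resulting high-probability event gives the claimed inequality.

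The main obstacle is the third stage: controlling $W_1(\mu,\hat\mu_N)$ with an \emph{explicit}, essentially dimension-free rate in $N$ and $\delta$. The expected discrepancy $\E\,W_1(\mu,\hat\mu_N)$ in general decays at a rate that worsens with the ambient dimension, so the clean $N^{-1/2}$ statement is only legitimate under a tail/transportation assumption on the marginals — precisely the role of $\varsigma'$ — and matching the constants (the $\sqrt 2$, the threshold $\varsigma'<\sqrt 2$, the ``$N$ large enough'' caveat) is the delicate part. The second stage is routine modulo pinning down exactly which boundedness and Lipschitz conditions on the loss and hypothesis class legitimize the Kantorovich--Rubinstein step, and the first stage is purely algebraic.
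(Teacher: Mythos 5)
This statement is an informal citation of \cite{redko2017theoretical}; the paper itself offers no proof of it, so there is nothing internal to compare against. Your reconstruction follows the original argument of Redko et al.\ essentially verbatim: the triangle-inequality decomposition through the ideal joint hypothesis $h^\star$ giving the $\lambda$ term, the Kantorovich--Rubinstein step bounding $\bigl|\epsilon_\mathcal{T}(h,h^\star)-\epsilon_\mathcal{S}(h,h^\star)\bigr|$ by $W_1$ of the population marginals (which in the original is licensed by taking hypotheses in the unit ball of an RKHS with a bounded, symmetric loss obeying the triangle inequality, so that $x\mapsto\ell(h(x),h^\star(x))$ is $1$-Lipschitz), and the Bolley--Guillin--Villani concentration of empirical measures under a $T_1(\varsigma)$ transportation inequality, which is exactly where the $\sqrt{2\log(1/\delta)/\varsigma'}\,N^{-1/2}$ terms, the restriction $\varsigma'<\sqrt{2}$, and the ``$N$ large enough'' caveat enter. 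Your sketch is correct and correctly identifies the delicate point (the dimension-free empirical $W_1$ rate holds only under the transportation-inequality assumption, not in general).
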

A bad pulling strategy on target domain $\mathcal{T}$ might minimize $W_1$ term to $0$ without any guarantee for the $\lambda$ term in the feature space. Similarly, \cite{kocc2025domain} also show that, during the optimal transport association process, the source inputs $x$ can be associated to target inputs $x^{\prime}$ that have different matching labels. Minimizing the marginal Wasserstein distance between such entangled pairs can cause the entanglement term to increase. To address these challenges, we will focus on the following question in this work:

\textbf{Question: What is the condition on the domain shift so that the target distribution can be aligned back to the source while preserving the correct class labels? Additionally, with only potentially noisy target pseudo-labels available, is there a theoretically guaranteed and computable metric to quantify the degree of violation of this condition?}

Formally, we seek conditions under which the OT between the marginals is label-preserving—i.e., it decomposes into per-class OT between the class-conditional marginals. We formalize and prove these conditions in Section~\ref{sec:theoretical analysis}. Guided by our theoretical analysis under the semi-discrete OT framework (Section~\ref{post_check}), we propose the \textbf{OT score}—a confidence metric designed to quantify uncertainty in pseudo-labeled target samples. It measures the degree to which the assigned pseudo-label would violate marginal alignment, thereby serving as a diagnostic of class-conditional alignment. As illustrated in Figure \ref{fig:decision_boundary}, the OT score reflects the flexibility of decision boundaries induced by semi-discrete OT alignment, which enables effective uncertainty estimation in the target domain. This allows the algorithm to abstain from classifying samples with high uncertainty. Compared to fully continuous or fully discrete OT formulations, semi-discrete OT is computationally more efficient, especially in high-dimensional spaces and large-scale datasets. A detailed comparison with existing confidence scores is provided in Appendix~\ref{app:related_work}.

We also propose two applications of OT score. First, within prototype-assisted SFUDA it acts as a training-time reweighting signal: less confident pseudo labels are down-weighted, suppressing harmful updates and improving accuracy. Second, it provides a reliable label-free proxy for target performance: the mean OT score serves as a surrogate for target error, enabling model selection without target labels. Our code is available on \href{https://github.com/yiz134/OTscore}{GitHub}.


\begin{figure*}[t]
  \centering
  \includegraphics[width=0.38\textwidth]{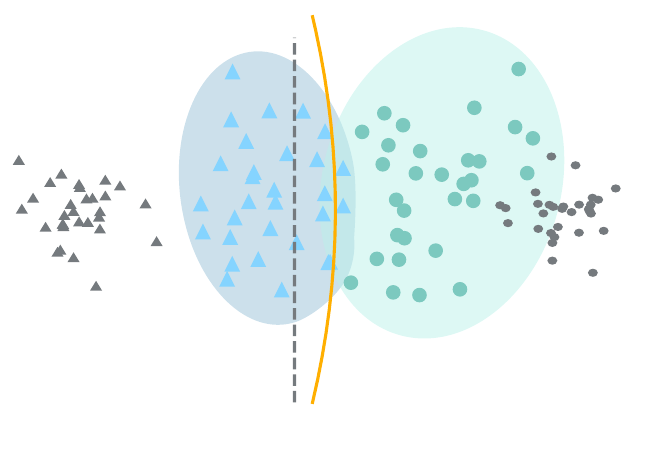}
  \hfill
  \includegraphics[width=0.6\textwidth]{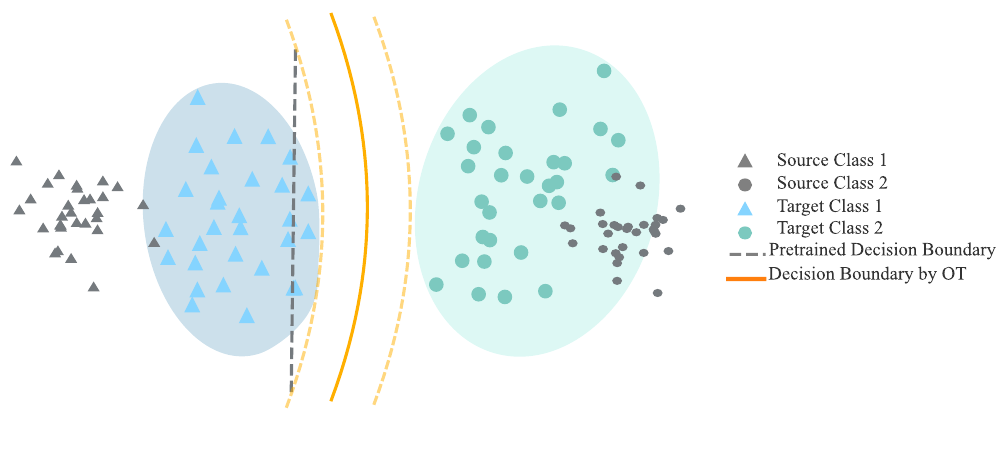}
  \caption{(Left) Overlapping clusters. (Right) Separated clusters with flexible decision boundaries.}
  \label{fig:decision_boundary}
\end{figure*}


\textbf{Contributions:}
\begin{itemize}
\item We provide theoretical justifications about allowed distribution shifts in order to have a label-preserving OT. 
\item We define a novel confidence score, the OT score, which is theoretically interpretable and accounts for the geometry induced by OT alignment between the source and target distributions. 
\item Experimental results demonstrate that filtering out low-confidence predictions consistently improves classification accuracy, and that the proposed OT score significantly outperforms existing confidence metrics across diverse pseudo-labeling strategies.
\item We demonstrate two practical uses of the OT score: (i) as a training-time reweighting signal for prototype-assisted SFUDA that down-weights less confident target pseudo-labels to suppress harmful updates and improve accuracy; and (ii) as a label-free proxy for target performance, which enables model selection without target labels.
\end{itemize}

\textbf{Notation}. Given any probability measure $\mu$ and a measurable map $T$ between measurable spaces, $T:\mathcal{X} \longrightarrow \mathcal{Y}$, we denote $T_\# \mu$ the pushforward measure on $\mathcal{Y}$ which is characterized by $\left(\mathrm{T}_{\#} \mu\right)(A)=\mu\left(\mathrm{T}^{-1}(A)\right) $ for measurable set $A$. Let $\hat{\mu}$ denote the corresponding empirical measure $\frac{1}{N}\sum_{i=1}^N \delta_{x_i}$ where $x_i$ are i.i.d. samples from $\mu$. We also write $x \in \hat{\mu}$ to indicate $x\in \{x_i\}_{i=1}^N$. If not otherwise specified, $\| \cdot \|$ represents the Euclidean norm.

\section{Optimal Transport and Domain Adaptation}


In this section, we first present the domain adaptation problem. Then we give necessary backgrounds of optimal transport.

\subsection{Domain Adaptation}
\label{sec: DA}
Let $\Omega \subseteq \mathbb{R}^d$ be the sample space and $\mathcal{P}(\Omega)$ be the set of all probability measures over $\Omega$. In a general supervised learning paradigm for classification problems, we have a labeling function $f_{\bm \theta^*}:\mathbb{R}^{d} \rightarrow \mathbb{R}^k$ obtained from a parametric family $f_{\bm\theta}$ by training on a set of points $\mathbf{X}^\mathcal{S}=\{x_1^\mathcal{S}, ..., x_{N^S}^\mathcal{S}\}$ sampled from a source distribution $P_\mathcal{S} \in \mathcal{P}(\Omega)$ and corresponding one-hot encoded labels $\mathbf{Y}^\mathcal{S}=\{y_1^\mathcal{S}, ..., y_{N^\mathcal{S}}^\mathcal{S}\}$. 

Let $\mathbf{X}^\mathcal{T}=\{x_1^\mathcal{T}, ..., x_{N^\mathcal{T}}^\mathcal{T}\}$ be a dataset sampled from a target distribution $P_\mathcal{T}  \in \mathcal{P}(\Omega)$ without label information. The difference between $P_\mathcal{S}$ and $P_\mathcal{T}$ may lead to a poor performance if we use $f_{\bm \theta^*}$ for the new classification problem. In order to overcome the challenge of distributional shift, a common way is to decompose a neural network $f_{\bm \theta}$ into a feature mapping $ \phi_{\bm{v}}$ composed with a classifier $h_{\bm{w}}$ such that $f_{\bm \theta} = h_{\bm{w}} \circ \phi_{\bm{v}}$, followed by minimizing the distance between $(\phi_{\bm{v}^*})_\# P_\mathcal{S}$ and $ (\phi_{\bm{v}})_\#P_\mathcal{T}$ so that the target distribution will be aligned with the source distribution in the feature space. Then we may classify  target data points based on the optimization result in the feature space. Various choices of divergence objective  $D((\phi_{\bm{v}^*})_\# P_\mathcal{S}, (\phi_{\bm{v}})_\#P_\mathcal{T})$ can be utilized. In this work, we focus on the distributional alignment between $(\phi_{\bm{v}^*})_\# P_\mathcal{S}$ and $(\phi_{\bm{v}})_\#P_\mathcal{T}$ using Wasserstein distance.

\subsection{Optimal Transport}
\subsubsection{General Theory of OT}
Given two probability distributions $\mu, \nu \in \mathcal{P}(\mathbb{R}^d)$, the Wasserstein-$p$ distance for $p\in [1, +\infty]$ is defined by 

$$W_p(\mu, \nu) := \Bigl(  \min_{\gamma \in \Gamma(\mu, \nu)} \int_{ \mathbb{R}^d \times \mathbb{R}^d } \|x-y\|^p d\gamma \Bigr)^{\frac{1}{p}},$$
where $\Gamma(\mu, \nu)$ is the collection of all couplings of $\mu$ and $\nu$. The optimization problem 
\begin{equation}
    \min_{\gamma \in \Gamma(\mu, \nu)} \int_{ \mathbb{R}^d \times \mathbb{R}^d } \|x-y\|^p d\gamma
    \label{eq:KP} \tag{KP}
\end{equation}
 is referred as the Kantorovitch problem in optimal transport. It is shown by Kantorovich–Rubinstein Duality theorem that \eqref{eq:KP} has a dual form \citep{santambrogio2015optimal}:

\begin{theorem}[Kantorovich–Rubinstein Duality]
    $$\min_{(KP)} = \sup \Bigl\{ \int_{\mathbb{R}^d} \phi(x)\ d\mu + \int_{\mathbb{R}^d} \psi(y) \ d\nu : (\phi, \psi) \in Lip_b(\mathbb{R}^d) \times Lip_b(\mathbb{R}^d),\ \phi(x) + \psi(y) \leq \|x-y\|^p \Bigl\}.$$
    In addition, when the supremum in the dual formulation is a maximum, the optimal value is attained at a pair $(\phi, \phi^c)$ with $\phi$, $\phi^c$ bounded and Lipschitz, where $\phi^c(y) := \inf_{x\in \mathbb{R}^d} \|x-y\|^p - \phi(x)$.
\end{theorem}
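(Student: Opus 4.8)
The plan is to prove the identity by exhibiting both sides as the two values of one min–max problem, swapping the order of optimization via a convex-duality theorem, and then post-processing an arbitrary optimal dual pair into the canonical form $(\phi,\phi^c)$. Write $c(x,y):=\|x-y\|^p$. For a nonnegative Borel measure $\gamma\in\mathcal{M}_+(\mathbb{R}^d\times\mathbb{R}^d)$ and $(\phi,\psi)\in C_b(\mathbb{R}^d)\times C_b(\mathbb{R}^d)$, introduce the Lagrangian
$$\mathcal{L}(\gamma,\phi,\psi):=\int c\, d\gamma+\Bigl(\int\phi\, d\mu-\int\phi(x)\, d\gamma\Bigr)+\Bigl(\int\psi\, d\nu-\int\psi(y)\, d\gamma\Bigr).$$
First I would observe that taking $\sup_{\phi,\psi}$ first enforces the marginal constraints — if a marginal of $\gamma$ is wrong, rescaling a test function on the offending region sends $\mathcal{L}$ to $+\infty$ — so $\inf_{\gamma\ge 0}\sup_{\phi,\psi}\mathcal{L}=\min_{(KP)}$. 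Taking $\inf_{\gamma\ge 0}$ first, the $\gamma$-dependent part is $\int(c(x,y)-\phi(x)-\psi(y))\, d\gamma$, whose infimum over $\gamma\ge 0$ is $0$ when $\phi(x)+\psi(y)\le c(x,y)$ everywhere and $-\infty$ otherwise; hence $\sup_{\phi,\psi}\inf_{\gamma\ge 0}\mathcal{L}$ equals the right-hand side of the theorem (with $C_b$ in place of $Lip_b$ for the moment). The inequality $\sup\inf\le\inf\sup$ is automatic.

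The reverse inequality is the real content, and here I would invoke the Fenchel--Rockafellar duality theorem in the form used by \citet{santambrogio2015optimal}: work in the Banach space $C_b(\mathbb{R}^d)\times C_b(\mathbb{R}^d)$ (which on a bounded sample space $\Omega$ is simply $C(\Omega)^2$), with the continuous linear functional $\Xi(\phi,\psi)=-\int\phi\, d\mu-\int\psi\, d\nu$ and the convex indicator $\Theta(\phi,\psi)=0$ if $\phi(x)+\psi(y)\le c(x,y)$ for all $(x,y)$ and $+\infty$ otherwise. One checks the qualification hypothesis — $\Theta$ is continuous at the strictly feasible point $\phi\equiv\psi\equiv-\tfrac12\|c\|_\infty-1$ — and identifies the conjugate problem with the Kantorovich problem over $\gamma\in\Gamma(\mu,\nu)$, the nonnegativity of the optimal $\gamma$ coming from the inequality sign in the constraint. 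Attainment of the primal $\min$ then follows from tightness of $\Gamma(\mu,\nu)$ (Prokhorov) together with weak-$*$ lower semicontinuity of $\gamma\mapsto\int c\, d\gamma$.

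For the second assertion I would use the $c$-transform. Given any feasible dual pair $(\phi,\psi)$, the constraint $\phi(x)+\psi(y)\le\|x-y\|^p$ gives $\phi^c(y):=\inf_x\|x-y\|^p-\phi(x)\ge\psi(y)$ pointwise, so replacing $\psi$ by $\phi^c$ only raises the objective, and $(\phi,\phi^c)$ stays feasible by definition of the infimum. Applying the transform once more in the other variable yields $\phi^{cc}\ge\phi$, so replacing $\phi$ by $\phi^{cc}$ raises the objective again; since the triple transform collapses, $\phi^{ccc}=\phi^c$, the resulting pair is exactly of the form $(\tilde\phi,\tilde\phi^c)$ and is at least as good as the original. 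For regularity, on a bounded domain $c(x,y)=\|x-y\|^p$ is Lipschitz in each variable with constant $p(\operatorname{diam}\Omega)^{p-1}$, so an infimum of translates of $c$ is Lipschitz with the same constant, and it is squeezed between $\inf c-\sup\phi$ and $\sup c-\inf\phi$, hence bounded; this places $(\tilde\phi,\tilde\phi^c)$ in $Lip_b\times Lip_b$ and simultaneously shows that restricting the dual supremum to $Lip_b\times Lip_b$ does not change its value.

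The hard part will be the strong-duality step: the $\le$ direction costs nothing, but the genuine equality demands a Hahn--Banach-type separation, and its two delicate points are (a) verifying the continuity/qualification hypothesis so that Fenchel--Rockafellar (or, alternatively, a Sion-type minimax theorem applied to the affine — hence concave–convex — Lagrangian $\mathcal{L}$) actually applies, and (b) recovering, when $\Omega$ is unbounded, the compactness needed both for the min–max swap and for attainment of the primal $\min$, which forces the tightness argument for $\Gamma(\mu,\nu)$ and, for $p>1$, an additional growth/integrability assumption to keep the conjugate problem nondegenerate and the $c$-transforms finite. These are precisely the ``certain assumptions'' the statement elides; on a bounded sample space $\Omega$ they are automatic.
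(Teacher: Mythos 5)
This theorem is not proved in the paper at all: it is quoted as a classical background result with a pointer to Santambrogio (2015), and the paper's appendix proofs concern only its own results (Theorems 3--6 and the corollary). Your sketch reproduces exactly the standard argument from that reference: the formal Lagrangian/min--max computation that identifies the two sides, Fenchel--Rockafellar (or a Sion-type minimax) to obtain strong duality, tightness of $\Gamma(\mu,\nu)$ plus lower semicontinuity for attainment of the primal minimum, and the $c$-transform improvement $(\phi,\psi)\mapsto(\phi^{cc},\phi^{c})$ together with $\phi^{ccc}=\phi^{c}$ and the Lipschitz/boundedness estimates for infima of translates of the cost to get the second assertion. At the level of detail given, the argument is sound, and you correctly flag the one genuine subtlety yourself: on an unbounded $\Omega$ the topological dual of $C_b(\mathbb{R}^d)$ consists of finitely additive measures, so identifying the Fenchel conjugate problem with the Kantorovich problem (and keeping the $c$-transforms finite, bounded and Lipschitz for $p>1$) requires either a tightness/approximation step or the compact-support assumption; since the paper's later results (e.g.\ its Theorem 5) assume compactly supported measures, the bounded-domain reading you adopt is the intended one, and under it your proof is complete in outline and consistent with the source the paper cites.
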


With the dual problem introduced, \cite{brenier1991polar} proves Brenier's theorem, which gives a sufficient condition under which the minimizer of the optimal transport problem is unique and is induced by a map $T = \nabla\phi$ for some convex function $\phi$, i.e. the OT map exists.




Under mild conditions on $\mu$ and $\nu$, Brenier's theorem is satisfied when $c(x,y) = \|x-y\|_p$ for $p>1$. Although there is no guarantee about uniqueness of the optimal transport map when $p=1$, the existence of an optimal transport map can be proved through a secondary variational problem \citep{santambrogio2015optimal}:

\begin{theorem}[Existence of optimal transport map when $p=1$]
    Let $O(\mu, \nu)$ be the optimal transport plans for the cost $\|x-y\|$ and denote by $K_p$ the functional associating to $\gamma \in \mathcal{P}(\Omega \times \Omega)$, the quantity $\int \|x-y\|^p \ d \gamma$. Under the usual assumption $\mu \ll \mathcal{L}^d$, the secondary variational problem 
    $$\min \left\{K_2(\gamma): \gamma \in O(\mu, \nu)\right\}$$
    admits a unique solution $\bar{\gamma}$, which is induced by a transport map T.
\end{theorem}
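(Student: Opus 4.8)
The plan is to run the classical \emph{transport rays} argument — the Sudakov-type reduction to one dimension, in the form exposed by Santambrogio. I would first dispose of the soft part, existence of a minimizer. The set $O(\mu,\nu)$ is nonempty, since the Kantorovich problem for the cost $\|x-y\|$ admits a minimizer; it is convex; and it is narrowly compact, being a narrowly closed subset of $\Gamma(\mu,\nu)$ (which is narrowly compact under the standing hypotheses), closedness following from narrow lower semicontinuity of $K_1$. As $K_2$ is narrowly lower semicontinuous and bounded below, it attains its minimum on $O(\mu,\nu)$; fix a minimizer $\bar\gamma$.

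Next I would extract the geometric structure of elements of $O(\mu,\nu)$. Fix a Kantorovich potential $\phi$ for the cost $\|x-y\|$, i.e.\ a $1$-Lipschitz function with $\phi(x)-\phi(y)\le\|x-y\|$ everywhere and equality $\gamma$-a.e.\ for every optimal $\gamma$. Then each $\gamma\in O(\mu,\nu)$ is concentrated on $\{(x,y):\phi(x)-\phi(y)=\|x-y\|\}$, so it transports mass only along \emph{transport rays}: maximal nondegenerate segments on which $\phi$ is affine with unit slope. Up to an $\mathcal{L}^d$-negligible set — hence, by $\mu\ll\mathcal{L}^d$, a $\mu$-negligible set — the relative interiors of these rays partition the transport set carrying all the moving mass, while off this set $\mu$ and $\nu$ coincide and the optimal transport is the identity. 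Disintegrate $\mu=\int\mu_\omega\,dm(\omega)$ and $\nu=\int\nu_\omega\,dm(\omega)$ along the ray partition; since every $\gamma\in O(\mu,\nu)$ keeps mass within single rays, the conditional marginals $\mu_\omega,\nu_\omega$ are the same for all such $\gamma$ and have equal mass for $m$-a.e.\ $\omega$.

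On a single ray the cost $\|x-y\|$ is the one-dimensional distance along it, so $K_2$ splits as $\int K_2(\gamma_\omega)\,dm(\omega)$ into one-dimensional transport problems between $\mu_\omega$ and $\nu_\omega$. On the real line the $K_2$-optimal plan for fixed marginals is the monotone rearrangement, it is unique, and — the crux — it is induced by a map exactly when $\mu_\omega$ has no atoms. Gluing the monotone maps $T_\omega$ (and the identity where $\mu$ and $\nu$ agree) produces a Borel map $T$ with $T_\#\mu=\nu$ whose induced plan is $\bar\gamma$; and since the $K_2$-optimal conditional plan is unique on each ray while $\mu_\omega,\nu_\omega$ do not depend on the chosen optimal plan, $\bar\gamma$ is the unique minimizer of the secondary problem.

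The real obstacle is the one claim I have not justified: that $\mu$-a.e.\ conditional $\mu_\omega$ is non-atomic — in fact absolutely continuous with respect to $\mathcal{H}^1$ on its ray. This does \emph{not} follow softly from $\mu\ll\mathcal{L}^d$; it is the well-known gap in Sudakov's original argument and needs genuine regularity of the ray field. The standard route shows that the map sending a point of the transport set to the direction of its ray is countably Lipschitz, which licenses a coarea/change-of-variables computation proving that the disintegration of $\mathcal{L}^d$ (hence of $\mu$) along rays is absolutely continuous on almost every ray; one must also verify that ray endpoints form an $\mathcal{L}^d$-null set. In the write-up I would either carry out this fine analysis or invoke its known resolution (Ambrosio, and independently Caffarelli--Feldman--McCann and Trudinger--Wang); everything else — existence by compactness and lower semicontinuity, the potential/ray description of optimal plans, and the one-dimensional monotone rearrangement — is routine.
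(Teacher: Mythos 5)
This statement is not proved in the paper at all: it is quoted as background from the cited reference \citep{santambrogio2015optimal} (it is the standard $L^1$ existence theorem proved there via the secondary variational problem), so there is no in-paper proof to compare against. Your sketch is essentially the textbook argument from that same source: existence of a minimizer of $K_2$ on $O(\mu,\nu)$ by narrow compactness and lower semicontinuity, reduction to transport rays of a Kantorovich potential, disintegration of $\mu$ and $\nu$ along rays, and one-dimensional monotone rearrangement on each ray, with the genuinely hard step being exactly the one you flag — that the conditional measures $\mu_\omega$ are non-atomic (absolutely continuous on a.e.\ ray), which requires the countably Lipschitz regularity of the ray-direction field and the negligibility of ray endpoints, i.e.\ the Sudakov-gap repair of Ambrosio / Caffarelli--Feldman--McCann / Trudinger--Wang. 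Deferring that step to the literature is legitimate here since the statement itself is a cited result, but as written your proposal is a correct outline rather than a complete proof. One local improvement: your uniqueness argument leans on the claim that the conditional target marginals $\nu_\omega$ are independent of the chosen optimal plan, which needs a separate justification at shared ray endpoints (where $\nu$ may carry atoms); the cleaner standard route is convexity — the minimizer set of the linear functional $K_2$ over the convex set $O(\mu,\nu)$ is convex, every minimizer is shown to be induced by a map, and a strict convex combination of two distinct map-induced plans is not induced by a map, forcing uniqueness.
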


\subsubsection{Semi-discrete Optimal Transport}
\label{sec:intro_sdot}
A special case of interest is when $\nu = \sum_{j=1}^m b_j \delta_{y_j}$ is a discrete probability measure. Adapting the duality result to this setting, we have $$W_p^p(\mu, \nu) = \max_{\bm{w}\in \mathbb{R}^m} \int_{\mathbb{R}^d}  \bm{w}^{c}(x) \ d\mu + \sum_{j=1}^m w_j b_j,$$
and in this case, $\bm{w}^c(x) := \min_{j} \|x-y_j\|^p - w_j$.

We can define a disjoint decomposition of the whole space using the Laguerre cells associated to the dual weights $\bm{w}$:
$$\mathbb{L}_{\bm w}(y_j) := \Bigl\{  
x\in \mathbb{R}^d: \forall j'\neq j, \|x-y_j\|^p - w_j \leq   \|x-y_{j'}\|^p - w_{j'}           \Bigr\}.$$
Then $$W_p^p(\mu, \nu) = \max_{\bm w\in \mathbb{R}^m}  \sum_{j=1}^{m} \int_{\mathbb{L}_{\bm w}(y_j)} \bigl( \|x-y_j\|^p - w_j  \bigr) \ d\mu + \langle \bm w, \bm b \rangle. $$
The optimization problem above can be solved by (stochastic) gradient ascent methods since the $j$-th entry of gradient for the objective function can be computed via $b_j-\int_{\mathbb{L}_{\bm w}(y_j)} d\mu$. Once the optimal vector $\bm w$ is computed, the optimal transport map $T_{\mu}^{\nu}$ simply maps $x\in \mathbb{L}_{\bm w}(y_j)$ to $y_j$ \citep{peyre2019computational}. Also, it can be shown such OT map is unique under mild assumptions \citep{hartmann2017semi, geiss2013optimally}. In the rest of the paper, for any $x\in \operatorname{supp}\mu$ and $y_j \in \operatorname{supp}\nu$, we denote $\tilde{d}_{\bm{w}}(x,y_j):=\|x-y_j\|^p - w_{j}$. Convergence properties of semi-discrete optimal transport have been studied extensively; see, e.g., \cite{genevay2016stochastic} and  \cite{peyre2019computational} for details. 
\section{Theoretical Analysis}
\label{sec:theoretical analysis}
In this section, we present theoretical insights into the use of OT for addressing DA problems. Complete proofs of all theoretical results are provided in Appendix~\ref{app:proofs}. For clarity and tractability, we focus on binary classification tasks. An extension to multiclass classification follows by a one-vs-all reduction. As discussed in Section~\ref{sec: DA}, our interest lies in neural network–based DA. To this end, we adopt assumptions inspired by Neural Collapse~\citep{kothapalli2022neural}, a prevalent phenomenon observed in well-trained neural networks. The extent to which the target feature distribution conforms to the Neural Collapse structure depends on the severity of the distributional shift between the source and target domains. These assumptions are introduced only to motivate the sufficient conditions in Section \ref{sec:sufficient_condition}. The OT score developed later in Section \ref{sec: ot_score} does not rely on target features being tightly clustered or well separated.
\begin{remark}[Neural Collapse]
    Neural collapse (NC) is a phenomenon observed in well-trained neural networks where the learned features of samples belonging to the same class converge to a single point or form tightly clustered structures in the feature space, while the features of different classes become maximally separated. NC emerges while training modern classification DNNs past zero error to further minimize the loss \citep{papyan2020prevalence}. During NC, the class means of the DNN’s last-layer features form a symmetric structure with maximal separation angle, while the features of each individual sample collapse to their class means. This simple structure of the feature layer not only appears beneficial for generalization but also helps in transfer learning and adversarial robustness. There are three main theoretical frameworks proposed to explain the emergence of NC: "Unconstrained Features Model" \citep{lu2022neural, tirer2022extended, ji2021unconstrained}, "Local Elasticity" \citep{zhang2021imitating} and "Neural (Tangent Kernel) Collapse" \citep{seleznova2024neural}. 
\end{remark}
In the following subsection, we focus on the setting where the class-conditional distributions in both the source and target domains are supported on, or concentrated within, bounded subsets of the feature space. Stronger NC in the source representation yields smaller cluster radii, thereby strengthening our results. Under this assumption, we analyze how data clusters are transported by the OT map.

\subsection{Sufficient Conditions for Correct Classification}
\label{sec:sufficient_condition}
We begin by presenting a necessary condition on the target data distribution under which correct classification can be expected after applying optimal transport. The following theorem quantifies the relationship between the probability of misclassification and the concentration properties of class-conditional distributions. Intuitively, if each class distributions is concentrated within a bounded region and these regions are well-separated across classes, classification results after OT map will be correct with high probability. 

\begin{theorem}
    Suppose for each of the probability measures $\mu_i, \nu_i$ there exist disjoint bounded sets $E_{\mu_i}$(or $E_{\nu_i}$) such that $\mu_i(E_{\mu_i})\geq 1-\epsilon$ and $(r_{\mu_1} + r_{\nu_1} + l_1) + (r_{\mu_2} + r_{\nu_2} +  l_2) < L_1 + L_2$, where $r_{\mu_i}$(or $r_{\nu_i}$) is the diameter of $E_{\mu_i}$(or $E_{\nu_i}$), $l_i = d(E_{\mu_i}, E_{\nu_i})$, $L_1 = d(E_{\mu_1}, E_{\nu_2})$, $L_2 = d(E_{\mu_2}, E_{\nu_1})$. Assume further that $E_{\nu_1}$ and $E_{\nu_2}$ are correctly separated by the trained classifier. Then with probability greater than $1-7\epsilon$, target samples will be correctly classified after the optimal transportation 
    $T_\nu^\mu$. 
    \label{thm:correct_classification}
\end{theorem}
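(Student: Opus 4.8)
The plan is to show that the optimal coupling between the target law $\nu$ and the source law $\mu$ is \emph{label-preserving up to an $O(\epsilon)$ fraction of the mass}—it sends essentially all of each target cluster $E_{\nu_i}$ into the matching source cluster $E_{\mu_i}$—and then to invoke the hypothesis that the trained classifier is correct on the source clusters. I will work with the optimal transport plan $\gamma$ between $\nu$ and $\mu$ (it is concentrated on the graph of $T_\nu^\mu$ whenever that map exists, so $\gamma(S\times R)$ is the mass of target points in $S$ sent into $R$), and I treat the cost $\|x-y\|$, i.e.\ $p=1$, which is the case for which the stated gap condition is exactly right; the general-$p$ version needs the same inequality with each summand raised to the $p$-th power. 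For bookkeeping: since $E_{\mu_1},E_{\mu_2}$ are disjoint with $\mu_i(E_{\mu_i})\ge 1-\epsilon$ we get $\mu_i(E_{\mu_{3-i}})\le\epsilon$, so $\mu$ places mass at most $\epsilon$ outside $E_{\mu_1}\cup E_{\mu_2}$; the same holds for $\nu$ and its clusters.

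The core step is a \emph{no long-range crossing} lemma coming from cyclical monotonicity of the optimal plan. Suppose toward a contradiction that both $\gamma(E_{\nu_1}\times E_{\mu_2})>0$ and $\gamma(E_{\nu_2}\times E_{\mu_1})>0$. Replacing each $E$ by its closure (harmless, since diameters and mutual distances are unchanged), $\operatorname{supp}\gamma$ then contains points $(x_1,z_1)\in\overline{E_{\nu_1}}\times\overline{E_{\mu_2}}$ and $(x_2,z_2)\in\overline{E_{\nu_2}}\times\overline{E_{\mu_1}}$, so cyclical monotonicity gives $\|x_1-z_1\|+\|x_2-z_2\|\le\|x_1-z_2\|+\|x_2-z_1\|$. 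The left-hand side is at least $L_2+L_1$ (the cross-class inter-cluster distances), while the right-hand side is at most $(r_{\nu_1}+l_1+r_{\mu_1})+(r_{\nu_2}+l_2+r_{\mu_2})$ by the triangle inequality routed through the same-class gaps $l_1,l_2$. This contradicts the hypothesis $(r_{\mu_1}+r_{\nu_1}+l_1)+(r_{\mu_2}+r_{\nu_2}+l_2)<L_1+L_2$. Hence at most one of the two ``wrong'' transports carries positive mass; say $\gamma(E_{\nu_2}\times E_{\mu_1})=0$, the other case being identical.

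Mass balance then forces the surviving wrong transport to be $O(\epsilon)$. Using $\gamma(E_{\nu_2}\times E_{\mu_1})=0$ and that at most $\epsilon$ of the target mass sits outside $E_{\nu_1}\cup E_{\nu_2}$, we obtain $\gamma(E_{\nu_1}\times E_{\mu_1})\ge\mu(E_{\mu_1})-\epsilon$; subtracting this from $\gamma(E_{\nu_1}\times\mathbb R^d)=\nu(E_{\nu_1})$ bounds $\gamma(E_{\nu_1}\times E_{\mu_1}^c)$ by a quantity of order $\epsilon$, and the analogous bound on the $E_{\nu_2}$ side is immediate in this case. Finally, fix a true-class-$i$ target point $x\sim\nu_i$: it is classified correctly whenever $x\in E_{\nu_i}$ \emph{and} $T_\nu^\mu(x)\in E_{\mu_i}$, since by hypothesis the trained classifier assigns the correct label on $E_{\mu_i}$. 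Its failure probability is at most $\nu_i(E_{\nu_i}^c)+\nu_i(\{x\in E_{\nu_i}: T_\nu^\mu(x)\notin E_{\mu_i}\})$; the first term is $\le\epsilon$, and the second is a constant multiple of $\gamma(E_{\nu_i}\times E_{\mu_i}^c)$ (the factor being the reciprocal of class $i$'s mixing weight in $\nu$), hence again of order $\epsilon$. Propagating all the disjointness and concentration slacks carefully through this count yields the claimed bound $1-7\epsilon$.

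The step I expect to be the main obstacle is making the no-crossing lemma airtight and choosing the right framework for it: one must check that $\operatorname{supp}\gamma$ really meets each product set whenever the corresponding $\gamma$-mass is positive, phrase the swap at the level of sub-plans rather than single monotonicity pairs if one wants it quantitatively, handle $p>1$ by replacing the gap condition with its $p$-th-power form, and—if the statement is to be read for the map $T_\nu^\mu$ and not merely for a plan—assume $\nu\ll\mathcal L^d$ so the existence/uniqueness results recalled in Section~\ref{sec:intro_sdot} apply. The remaining $\epsilon$-accounting is routine but does need care, since a loose count would not land on the constant $7$.
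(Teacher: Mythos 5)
Your proposal is correct in substance, and it reaches the conclusion by a genuinely different route than the paper. The paper's proof sets $F_i=(T_\nu^\mu)^{-1}(E_{\mu_i})$, intersects with the target clusters, truncates $E_{\nu_1}\cap F$ by a ball $B_R$ so that the two pieces carry equal $\nu$-mass, invokes the restriction theorem (optimality of the restricted plan between its marginals) and then applies its Lemma~\ref{lem:unbalance} --- a swap-of-sub-plans contradiction that uses exactly the gap condition $(r_{\mu_1}+r_{\nu_1}+l_1)+(r_{\mu_2}+r_{\nu_2}+l_2)<L_1+L_2$ --- with the bookkeeping landing at $\tfrac12-\tfrac72\epsilon$ per class and hence $1-7\epsilon$. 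You instead stay with the full optimal plan $\gamma$, replace the sub-plan swap by $c$-cyclical monotonicity of $\operatorname{supp}\gamma$ applied to one point of $E_{\nu_1}\times E_{\mu_2}$ and one of $E_{\nu_2}\times E_{\mu_1}$ (the exchange inequality is the same mechanism as the paper's lemma, but at the level of two support points), and then close with marginal mass balance rather than a restriction/truncation construction. What your route buys: no auxiliary lemma, no appeal to the restriction theorem or the ball $B_R$, an accounting that is more transparent (a careful count along your lines actually gives a constant better than $7$), and it only needs that the plan induced by $T_\nu^\mu$ is optimal, not unique. What the paper's route buys: Lemma~\ref{lem:unbalance} is stated for unbalanced target mixtures $p\le\tfrac12$ and is reused later (Theorem~\ref{cor:optimal_weight}), so factoring the exchange argument out as a standalone lemma has value beyond this theorem.

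Two caveats you should make explicit. First, your mass-balance step bounds $\gamma(E_{\nu_1}\times E_{\mu_1}^c)$ by $\nu(E_{\nu_1})-\mu(E_{\mu_1})+\epsilon$, which is $O(\epsilon)$ only if the class proportions of $\mu$ and $\nu$ match (e.g.\ both mixtures are $\tfrac12$--$\tfrac12$); the same is true of your ``the other case is identical'' reduction. This is not a defect relative to the paper --- its proof makes the identical implicit assumption when it asserts $\mu(E_{\mu_i}),\nu(E_{\nu_i})\ge\tfrac12-\tfrac12\epsilon$, and without matched proportions the $1-7\epsilon$ claim is simply false --- but you should state it, since nothing in the theorem text pins the weights down. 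Second, your reading of the separation hypothesis as correctness of the classifier on the source clusters $E_{\mu_i}$ (rather than the literal $E_{\nu_i}$ in the statement) is the one the argument needs, and matches how the paper uses it; noting this explicitly would make the final counting step airtight. With those two points added, your sketch, carried out with the $\epsilon$-accounting you defer, is a complete proof.
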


\begin{remark}
    Our concentration assumption applies to various probability distributions including subgaussian distributions. 
\end{remark}

The proof is based on the intuitive observation from the following lemma:
\begin{lemma}
\label{lemma:unbalance}
    Suppose we have probability measures $\mu_i$ and $\nu_i$ with bounded support. Also assume $\supp \mu_1$ and $\supp \mu_2$ are disjoint, $\supp \nu_1$ and $\supp \nu_2$ are disjoint. Let $r_{\mu_i}$ denote the diameter of the support of $\mu_i$ and set $l_i = d(\supp \mu_i, \supp \nu_i)$, $L_1 = d(\supp \mu_1, \supp \nu_2)$, $L_2 = d(\supp \mu_2, \supp \nu_1)$. Suppose $\mu := \frac{1}{2}\mu_1 + \frac{1}{2}\mu_2$, $\nu :=  p\nu_1 + (1-p)\nu_2$ for some $p\in (0, \frac{1}{2}]$. If $(r_{\mu_1} + r_{\nu_1} + l_1) + (r_{\mu_2} + r_{\nu_2} +  l_2) < L_1 + L_2$, then $T_\nu^\mu(\supp \nu_1) \subset \supp \mu_1$ up to a $\nu$ negligible set.
\end{lemma}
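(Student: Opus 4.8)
The plan is to prove the equivalent statement about the optimal coupling and argue by an exchange (``swapping'') argument. Since the transport map $T_\nu^\mu$ is assumed to exist, the plan $\gamma := (\mathrm{id},T_\nu^\mu)_\#\nu$ is an optimal coupling of $\nu$ and $\mu$, and the claim ``$T_\nu^\mu(\supp\nu_1)\subset\supp\mu_1$ up to a $\nu$-negligible set'' is equivalent to $\gamma\big(\supp\nu_1\times\supp\mu_2\big)=0$. So it suffices to show that \emph{no} optimal coupling of $\nu$ and $\mu$ can place positive mass on $\supp\nu_1\times\supp\mu_2$. (For concreteness we take the $W_1$ cost $c(x,y)=\|x-y\|$, so that the displayed separation inequality is exactly what the exchange step below needs.)

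Because $\supp\mu=\supp\mu_1\sqcup\supp\mu_2$ and $\supp\nu=\supp\nu_1\sqcup\supp\nu_2$ are \emph{disjoint} unions, $\gamma$ splits into four blocks $\gamma_{ij}:=\gamma|_{\supp\nu_i\times\supp\mu_j}$ of mass $m_{ij}$. Reading off the marginals, $m_{11}+m_{12}=\nu(\supp\nu_1)=p$ and $m_{11}+m_{21}=\mu(\supp\mu_1)=\tfrac12$, so $m_{21}-m_{12}=\tfrac12-p\ge 0$; in particular $m_{12}>0$ forces $m_{21}\ge m_{12}>0$. Suppose, for contradiction, that $m_{12}>0$. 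Fix $0<\eps\le m_{12}$, choose sub-measures $\gamma'_{12}\le\gamma_{12}$ and $\gamma'_{21}\le\gamma_{21}$ each of total mass $\eps$ (e.g.\ $\gamma'_{ij}=\tfrac{\eps}{m_{ij}}\gamma_{ij}$), and build the competitor
$$
\gamma^{\star}\;=\;\gamma-\gamma'_{12}-\gamma'_{21}\;+\;\tfrac1\eps\,(P_1\gamma'_{12})\otimes(P_2\gamma'_{21})\;+\;\tfrac1\eps\,(P_1\gamma'_{21})\otimes(P_2\gamma'_{12}),
$$
where $P_1,P_2$ are the coordinate push-forwards, so the two new product terms are supported on $\supp\nu_1\times\supp\mu_1$ and $\supp\nu_2\times\supp\mu_2$, respectively. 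A direct marginal computation gives $\gamma^{\star}\in\Gamma(\nu,\mu)$, and $\gamma^{\star}\ge 0$ since $\gamma'_{12},\gamma'_{21}$ live on the disjoint sets $\supp\nu_1\times\supp\mu_2$ and $\supp\nu_2\times\supp\mu_1$ and are each dominated by $\gamma$.

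For the cost comparison, transporting a unit of mass from $\supp\nu_1$ to $\supp\mu_2$ costs at least $L_2=d(\supp\mu_2,\supp\nu_1)$, and from $\supp\nu_2$ to $\supp\mu_1$ at least $L_1$; conversely, since the union of two bounded sets at distance $t$ has diameter at most the sum of their diameters plus $t$, transport within $\supp\nu_i\cup\supp\mu_i$ costs at most $r_{\mu_i}+r_{\nu_i}+l_i$ per unit. Hence
$$
\mathrm{cost}(\gamma^{\star})-\mathrm{cost}(\gamma)\;\le\;\eps\Big[(r_{\mu_1}+r_{\nu_1}+l_1)+(r_{\mu_2}+r_{\nu_2}+l_2)-L_1-L_2\Big]\;<\;0
$$
by hypothesis, contradicting optimality of $\gamma$. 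Thus $m_{12}=0$; since $T_\nu^\mu$ pushes $\nu$ into $\supp\mu_1\cup\supp\mu_2$, this yields $T_\nu^\mu(x)\in\supp\mu_1$ for $\nu$-a.e.\ $x\in\supp\nu_1$, which is the claim.

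The step I expect to be the main obstacle is the measure surgery itself: checking rigorously that $\gamma^{\star}$ has the prescribed marginals, is a nonnegative measure, and that its cost differs from that of $\gamma$ by exactly the four boundary integrals bounded above. Disjointness of the four blocks' supports is what keeps $\gamma-\gamma'_{12}-\gamma'_{21}$ nonnegative, and one should also note that the infima defining $l_i,L_1,L_2$ are attained (bounded supports in $\mathbb{R}^d$ have compact closure), so the per-unit cost bounds are legitimate. The remaining pieces — the mass-balance identity, the diameter-of-union estimate, and upgrading $\gamma(\supp\nu_1\times\supp\mu_2)=0$ to the $\nu$-a.e.\ statement about the map — are routine.
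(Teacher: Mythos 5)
Your proof is correct, and it rests on the same core idea as the paper's: a mass-balance bookkeeping (from $p\le\tfrac12$, any mass of $\nu_1$ sent to $\supp\mu_2$ forces at least as much mass of $\nu_2$ to be sent to $\supp\mu_1$) followed by an exchange argument that strictly lowers the cost, contradicting optimality, with the same per-unit bounds $L_1,L_2$ from below and $r_{\mu_i}+r_{\nu_i}+l_i$ from above. The difference is in execution: the paper stays at the Monge level, exhibiting sets $A\subset\supp\nu_1$ and $B'\subset\supp\nu_2$ of equal $\nu$-mass (using $\nu_i\ll\mathcal{L}$ to carve out $B'$ of exact mass $\delta$) and a swap map $\tilde T$ with $\tilde T(A)=T_\nu^\mu(B')$, $\tilde T(B')=T_\nu^\mu(A)$, whereas you perform the surgery on the Kantorovich plan $\gamma=(\mathrm{id},T_\nu^\mu)_\#\nu$, replacing sub-couplings $\gamma'_{12},\gamma'_{21}$ by the normalized products of their marginals. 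Your version buys rigor at the delicate point: specifying $\tilde T$ only through the images of $A$ and $B'$ does not by itself produce a measurable, measure-preserving competitor (one needs an isomorphism-type argument to push the restricted measures onto the right targets), and your product-measure construction sidesteps this entirely, with the marginal and nonnegativity checks being elementary; it also removes any need for absolute continuity in the swap step, since sub-plans of arbitrary mass $\eps\le m_{12}$ always exist. The paper's map-level argument is shorter and more visual, but yours is the cleaner admissible-competitor construction; both reach the same strict inequality and conclusion.
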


\subsection{Semi-Discrete Setting}
\label{post_check}
Although results in Section~\ref{sec:sufficient_condition} provide valuable theoretical insights into OT alignment, they remain difficult to compute or verify in practical settings. In this section, we leverage the semi-discrete OT formulation to derive an equivalent condition for perfect classification under OT alignment. Building upon this, we introduce a novel quantity, OT score, that can be utilized in practice to post-check the performance of the classification from distributional alignment based DA algorithms. Also, we will show later how the following theorem inspires a way to recognize target data points classified with low confidence.
\begin{theorem}
\label{thm:semi_discrete} Suppose $\mu$ and $\nu$ are compactly supported. Then
    $(T_{ {\nu}}^{\hat{\mu}})_\#  {\nu}_1 = \hat{\mu}_1$ and $(T_{ {\nu}}^{\hat{\mu}})_\#  {\nu}_2 = \hat{\mu}_2$ if and only if 
    $$\sup_{x \in  {\nu_1}} \max_{z \in \hat{\mu}_2}\min_{y \in \hat{\mu}_1}\tilde{d}_{\bm{w}}(x, y) - \tilde{d}_{\bm{w}}(x, z)  \leq 0 \leq \inf_{x \in  {\nu_2}} \max_{z \in \hat{\mu}_2}\min_{y \in \hat{\mu}_1}\tilde{d}_{\bm{w}}(x, y) - \tilde{d}_{\bm{w}}(x, z),$$
    where $\tilde{d}$ is defined as in Section~\ref{sec:intro_sdot}
\end{theorem}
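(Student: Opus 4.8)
The plan is to reduce the two pushforward identities $(T_{\nu}^{\hat\mu})_{\#}\nu_i=\hat\mu_i$ to a single pointwise statement---``$T_{\nu}^{\hat\mu}$ transports $\nu_i$-almost every point to a source atom of its own class''---and then to recognize the displayed two-sided inequality as exactly that statement. Throughout write $T:=T_{\nu}^{\hat\mu}$. Recall from Section~\ref{sec:intro_sdot} that, under the mild regularity assumed there, $T$ exists and sends $x\in\supp\nu$ to the atom $y\in\supp\hat\mu$ attaining $\min_{y'\in\supp\hat\mu}\tilde{d}_{\bm{w}}(x,y')$, is $\nu$-almost everywhere single-valued (hence the Laguerre-cell boundaries are $\nu$-negligible), and satisfies $T_{\#}\nu=\hat\mu$; compact support keeps the transport problem well-posed.

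First I would simplify the quantity appearing in the statement. Since $\tilde{d}_{\bm{w}}(x,z)$ does not depend on $y$, for every $x$ we have
\[
\max_{z\in\hat\mu_2}\min_{y\in\hat\mu_1}\bigl(\tilde{d}_{\bm{w}}(x,y)-\tilde{d}_{\bm{w}}(x,z)\bigr)=\min_{y\in\hat\mu_1}\tilde{d}_{\bm{w}}(x,y)-\min_{z\in\hat\mu_2}\tilde{d}_{\bm{w}}(x,z)=:g(x),
\]
and $g$ is continuous, being a finite combination of minima and maxima of continuous functions. The asserted chain of inequalities then reads precisely ``$g\le 0$ on $\supp\nu_1$ and $g\ge 0$ on $\supp\nu_2$.'' Next, from the definition of $T$: for $x$ lying off the cell boundaries the minimizer is unique, so $T(x)\in\supp\hat\mu_1\iff g(x)<0$ and $T(x)\in\supp\hat\mu_2\iff g(x)>0$ (the two supports are disjoint finite sets, since a source point carries a single label), and in particular $\{g=0\}$ is contained in the $\nu$-null cell boundaries. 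Therefore ``$T(x)\in\supp\hat\mu_1$ for $\nu_1$-a.e.\ $x$'' is equivalent to ``$g\le 0$ $\nu_1$-a.e.'', which, because $\{g\le 0\}$ is closed and of full $\nu_1$-mass, is equivalent to ``$g\le 0$ on $\supp\nu_1$''; the symmetric statement holds for $\nu_2$, $\hat\mu_2$, and ``$g\ge 0$''.

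Then I would prove the reduction lemma: $T_{\#}\nu_1=\hat\mu_1$ and $T_{\#}\nu_2=\hat\mu_2$ hold simultaneously if and only if $T$ maps $\nu_i$-a.e.\ point into $\supp\hat\mu_i$ for $i=1,2$. The ``only if'' direction is immediate, since $T_{\#}\nu_i=\hat\mu_i$ gives $\nu_i\bigl(T^{-1}(\supp\hat\mu_i)\bigr)=\hat\mu_i(\supp\hat\mu_i)=1$. For the ``if'' direction, write $\nu=\alpha\nu_1+(1-\alpha)\nu_2$ and $\hat\mu=\beta\hat\mu_1+(1-\beta)\hat\mu_2$ with the normalized class-conditionals; by linearity of the pushforward, $\beta\hat\mu_1+(1-\beta)\hat\mu_2=\hat\mu=T_{\#}\nu=\alpha\,T_{\#}\nu_1+(1-\alpha)\,T_{\#}\nu_2$. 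If $T_{\#}\nu_1$ is carried by $\supp\hat\mu_1$ and $T_{\#}\nu_2$ by the disjoint set $\supp\hat\mu_2$, restricting this equality to $\supp\hat\mu_1$ yields $\beta\hat\mu_1=\alpha\,T_{\#}\nu_1$; comparing total masses of these two probability measures forces $\alpha=\beta$, hence $T_{\#}\nu_1=\hat\mu_1$, and symmetrically $T_{\#}\nu_2=\hat\mu_2$. (In particular the class proportions of $\nu$ and $\hat\mu$ are forced to agree; if they do not, both sides of the claimed equivalence are false.) Combining this with the previous paragraph gives $T_{\#}\nu_1=\hat\mu_1$ and $T_{\#}\nu_2=\hat\mu_2$ $\iff$ $T$ maps $\nu_i$-a.e.\ point into $\supp\hat\mu_i$ $\iff$ $g\le 0$ on $\supp\nu_1$ and $g\ge 0$ on $\supp\nu_2$ $\iff$ $\sup_{x\in\nu_1}g(x)\le 0\le\inf_{x\in\nu_2}g(x)$, which is the statement.

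The step I expect to be the main obstacle is the careful bookkeeping of the tie set $\{g=0\}$: the non-strict inequalities ``$\le 0\le$'' are valid only because the semi-discrete structure makes the Laguerre-cell boundaries $\nu$-negligible, so I must invoke that regularity (equivalently, the $\nu$-a.e.\ single-valuedness of $T$) explicitly rather than tacitly; a related subtlety is that the pushforward-decomposition in the reduction lemma must genuinely use the disjointness of $\supp\hat\mu_1$ and $\supp\hat\mu_2$ and deduce---rather than presuppose---that the class proportions of $\nu$ and $\hat\mu$ coincide.
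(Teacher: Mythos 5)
Your proof is correct and follows essentially the same route as the paper's: both reduce the pushforward identities $(T_{\nu}^{\hat\mu})_{\#}\nu_i=\hat\mu_i$ to the pointwise condition that each target point is assigned, via the Laguerre-cell rule $\min_y \tilde{d}_{\bm{w}}(x,y)\le\min_z \tilde{d}_{\bm{w}}(x,z)$ (resp.\ the reverse), to an atom of its own class, and then rewrite that condition as the displayed $\sup$--$\inf$ inequality by pulling the $\max_z$ out of the difference. The only difference is that you make explicit two steps the paper leaves implicit—the $\nu$-negligibility of cell boundaries handling the tie set $\{g=0\}$, and the mass-balance argument showing that class-preserving assignment together with $T_{\#}\nu=\hat\mu$ forces the class proportions to agree and hence the classwise pushforwards to match—which is a welcome tightening rather than a different approach.
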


With $\mu$ being the source measure and $\nu$ being the target measure, we define a new function $g(x): =  \max_{z \in \hat{\mu}_2} \min_{y \in \hat{\mu}_1} \tilde{d}(x, y) - \tilde{d}(x, z)$. Hence, the $g$ value gap $\inf_{x \in  {\nu}_2} g(x)-\sup_{x \in  {\nu}_1}g(x)$ reflects the flexibility of a classification boundary induced by semi-discrete OT and a larger $g$ value gap implies better classification performance. See Figure~\ref{fig:decision_boundary} for a visual illustration. 

In practice, this $g$ value gap can be used as a post-check tool once target pseudo labels have been assigned by any algorithm. We can compute the gap $\inf_{x \in  {\nu}_1} g(x)-\sup_{x \in  {\nu}_2}g(x)$ based on pseudo-labeled partition of the target distribution $ {\nu}_1$ and $ {\nu}_2$. In addition to global assessment, the individual $g(x)$ values can also serve as confidence indicators. Specifically, for target samples pseudo-labeled as class $\nu_2$, larger $g(x)$ values indicate higher classification confidence; conversely, for samples labeled as class $\nu_1$, smaller $g(x)$ values indicate higher confidence.

\begin{remark}
    Although a similar version of Theorem~\ref{thm:semi_discrete} can be derived in the discrete OT setting using analogous techniques, we choose to adopt the semi-discrete OT formulation for computing the OT score in our work, due to the following reasons:
    
    (1) \textbf{Efficient incremental optimization:} Semi-discrete OT can be updated incrementally with SGD instead of being solved from scratch. As target pseudo-labels evolve, we reuse the previous solution as initialization and perform a few mini-batch SGD updates to reflect the new assignments. 
    
    (2) \textbf{Handling ambiguity in low-confidence filtering:} In the discrete case, there exists ambiguity in determining which points should be eliminated as low-confidence samples—whether to remove points with split weights across transport plans, or those with only small transport margins. The semi-discrete formulation mitigates such ambiguity by providing more stable and geometrically meaningful transport behavior.
\end{remark}
The following corollary might be helpful in some computation scenarios: it enables computing the semi-discrete OT for each component separately, thereby reducing the dimension of the dual weights. 
\begin{corollary}
\label{corollary:semi_discrete}
    Under assumptions of \ref{thm:semi_discrete} and suppose $\bm{m}$ and $\bm{l}$ are the weight vectors associated with $T_{ {\nu}_1}^{\hat{\mu}_1}$ and $T_{ {\nu}_2}^{\hat{\mu}_2}$, respectively. Then $(T_{ {\nu}}^{\hat{\mu}})_\#  {\nu}_1 = \hat{\mu}_1$ and $(T_{ {\nu}}^{\hat{\mu}})_\#  {\nu}_2 = \hat{\mu}_2$ if and only if 
    $$\sup_{x \in  {\nu_1}} \max_{z \in \hat{\mu}_2}\min_{y \in \hat{\mu}_1}\tilde{d}_{\bm{m}}(x, y) - \tilde{d}_{\bm{l}}(x, z) \leq \inf_{x \in  {\nu_2}} \max_{z \in \hat{\mu}_2}\min_{y \in \hat{\mu}_1}\tilde{d}_{\bm{m}}(x, y) - \tilde{d}_{\bm{l}}(x, z).$$
\end{corollary}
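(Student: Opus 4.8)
The plan is to build on Theorem~\ref{thm:semi_discrete} by showing that a label-preserving optimal dual weight vector for the full problem $(\nu,\hat\mu)$ splits block-wise into the per-class weights $\bm m$ and $\bm l$, up to additive constants, and conversely that any block-wise vector with a suitable offset is optimal for $(\nu,\hat\mu)$. Write $\hat\mu=q\hat\mu_1+(1-q)\hat\mu_2$ and $\nu=p\nu_1+(1-p)\nu_2$ (label-preservation forces $p=q$), let $\bm w^\star=(\bm w^\star_{|\hat\mu_1},\bm w^\star_{|\hat\mu_2})$ be an optimal dual vector for $(\nu,\hat\mu)$ indexed by the source atoms, set $\tilde g(x):=\max_{z\in\hat\mu_2}\min_{y\in\hat\mu_1}\tilde d_{\bm m}(x,y)-\tilde d_{\bm l}(x,z)$ as in the statement, and recall $g_{\bm w^\star}(x):=\max_{z\in\hat\mu_2}\min_{y\in\hat\mu_1}\tilde d_{\bm w^\star}(x,y)-\tilde d_{\bm w^\star}(x,z)$ from Theorem~\ref{thm:semi_discrete}.

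For the ``only if'' direction, assume $(T_\nu^{\hat\mu})_\#\nu_1=\hat\mu_1$ and $(T_\nu^{\hat\mu})_\#\nu_2=\hat\mu_2$. On $\operatorname{supp}\nu_1$ no atom of $\hat\mu_2$ is the Laguerre winner, so the sub-problem cell $\mathbb{L}_{\bm w^\star_{|\hat\mu_1}}(y_j)$ agrees with $\mathbb{L}_{\bm w^\star}(y_j)$ there; combining $\nu(\mathbb{L}_{\bm w^\star}(y_j))=q\,b_j$ (full optimality) with $\nu_2(\mathbb{L}_{\bm w^\star}(y_j))=0$ yields $\nu_1(\mathbb{L}_{\bm w^\star_{|\hat\mu_1}}(y_j))=b_j$ for each $y_j\in\hat\mu_1$, and the vanishing-gradient criterion for the sub-problem's concave dual objective then shows $\bm w^\star_{|\hat\mu_1}$ is optimal for $(\nu_1,\hat\mu_1)$. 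By uniqueness of the semi-discrete dual weights up to an additive constant (Section~\ref{sec:intro_sdot}), $\bm w^\star_{|\hat\mu_1}=\bm m+c_1\vone$, and symmetrically $\bm w^\star_{|\hat\mu_2}=\bm l+c_2\vone$. Substituting into the distances, $\tilde d_{\bm w^\star}(x,y)-\tilde d_{\bm w^\star}(x,z)=\tilde d_{\bm m}(x,y)-\tilde d_{\bm l}(x,z)-(c_1-c_2)$, so $g_{\bm w^\star}(x)=\tilde g(x)-(c_1-c_2)$, and Theorem~\ref{thm:semi_discrete} gives $\sup_{x\in\nu_1}\tilde g(x)\le c_1-c_2\le\inf_{x\in\nu_2}\tilde g(x)$, which implies the Corollary's inequality.

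For the ``if'' direction, assume $\sup_{x\in\nu_1}\tilde g(x)\le\inf_{x\in\nu_2}\tilde g(x)$, pick $t$ in this nonempty interval, and take $\bm w:=(\bm m,\bm l-t\vone)$. Shifting the whole $\hat\mu_2$-block by $-t$ does not change the argmin among $\hat\mu_2$-atoms, nor does keeping $\bm m$ change the argmin among $\hat\mu_1$-atoms, so $g_{\bm w}(x)=\tilde g(x)-t$, which is $\le0$ on $\operatorname{supp}\nu_1$ and $\ge0$ on $\operatorname{supp}\nu_2$; hence ($\nu$-a.e.) the Laguerre winner of $\bm w$ lies in $\hat\mu_1$ on $\operatorname{supp}\nu_1$ and in $\hat\mu_2$ on $\operatorname{supp}\nu_2$. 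Therefore $\nu(\mathbb{L}_{\bm w}(y_j))=q\,b_j$ for every source atom (the mass being supplied entirely by the matching class, where $\bm m$ resp.\ $\bm l$ is optimal for its sub-problem), the gradient $q\,b_j-\nu(\mathbb{L}_{\bm w}(y_j))$ of the concave objective vanishes, so $\bm w$ is optimal for $(\nu,\hat\mu)$ and $T_\nu^{\hat\mu}$ maps $\mathbb{L}_{\bm w}(y_j)\ni x\mapsto y_j$, pushing $\nu_1$ onto $\hat\mu_1$ and $\nu_2$ onto $\hat\mu_2$.

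I expect the main obstacle to be the block decomposition $\bm w^\star=(\bm m+c_1\vone,\bm l+c_2\vone)$ in the ``only if'' direction: one must establish optimality of $\bm w^\star_{|\hat\mu_1}$ for the rescaled sub-problem at the level of induced cell masses (the cell identities holding only $\nu$-a.e.), invoke uniqueness of the semi-discrete dual weights up to a global constant, and keep careful track of the mixture proportions across the two sub-problems. The ``if'' direction is comparatively routine, reducing to the same vanishing-gradient optimality criterion already used for Theorem~\ref{thm:semi_discrete}.
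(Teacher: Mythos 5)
Your proposal is correct and follows essentially the same route as the paper's proof: both directions rest on the block decomposition of the dual weights into $\bm m$ and $\bm l$ up to additive constants (shift invariance plus uniqueness of semi-discrete dual weights up to a constant) and then reduce to Theorem~\ref{thm:semi_discrete} by choosing the offset to sandwich zero. The only difference is that you spell out the sub-problem optimality via cell masses and the vanishing-gradient criterion, details the paper leaves implicit or delegates to the citation of \cite{geiss2013optimally}.
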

\section{OT score Computation}
\label{sec: ot_score}
\begin{algorithm}[t]
	\caption{OT score} 
     \label{alg:OT_score}
	\begin{algorithmic}[1]
 \State 
 {\bfseries Input:} Source class-wise mean feature representations $\mathbf{Z}^\mathcal{S}$, corresponding labels $\bm y^\mathcal{S}$, source sample weights $\bm a$, target features $\mathbf{Z}^\mathcal{T}$ and corresponding predicted labels (or pseudo labels) $\hat{\bm y}^\mathcal{T}$, entropic regularization parameter $\varepsilon$, learning rate $\gamma$. 

\State Initialize $\bm w_0 = \bm 0$.
\State Compute class proportions $p_c=\frac{|\{z_i^\mathcal{T} \in \mathbf{Z}^\mathcal{T}: \hat{y}_i^\mathcal{T} = c\}|}{|\hat{\bm y}^\mathcal{T}|}$.
\For {$t=1,2,\ldots, max\_iter$}
\State Draw a batch of samples $\mathbf{Z}^\mathcal{T}_{B_t}$ from $\mathbf{Z}^\mathcal{T}$.
\State Compute smoothed indicator functions of Laguerre cells $\mathbb{L}_{\bm w_t}(z_j^\mathcal{S})$ for each $z_j^\mathcal{S}$: $$\chi_j^{\varepsilon}(x, \bm w_t):=\frac{e^{\frac{-\|x- z_j^\mathcal{S}\|+\bm w_t^j}{\varepsilon}}}{\sum_{\ell} e^{\frac{-\|x- z_{\ell}^\mathcal{S}\|+\bm w_t^j}{\varepsilon}}}.$$  

\State Update $\bm w_t$:
\[
\bm w_{t+1}
=
\bm w_t
-
\gamma
\left[
\frac{1}{|\mathbf{Z}_{B_t}^{\mathcal T}|}
\sum_{x\in \mathbf{Z}_{B_t}^{\mathcal T}}
\chi_j^{\varepsilon}(x,\bm w_t)
-
a_j
\right]_{j=1}^{N_{\mathcal S}}
\in \mathbb{R}^{N_{\mathcal S}}.
\]
\EndFor

\For {($z_i^\mathcal{T}, \hat{\bm y}_i^\mathcal{T}) \in (\mathbf{Z}^\mathcal{T}, \hat{\bm y}^\mathcal{T})$}

\State Compute $g_j(x) := \max_{y \in \mathbf{X}_{\hat{\bm y}_i^\mathcal{T}}} \min_{z \in \mathbf{X}_j} \tilde{d}(x, z) - \tilde{d}(x, y)$ for each class $j$.
\State Compute OT score of $z_i^\mathcal{T}$: $g(z_i^\mathcal{T}) = \min_j g_j(z_i^\mathcal{T})$

\EndFor

  \end{algorithmic} 
\end{algorithm}
In this section, we extend the definition of OT score to multiclass setting and present the algorithm used for computation. Specifically, we model the source distribution in the feature space as a discrete measure and treat the target data as samples drawn from a continuous measure.


\begin{definition}
    Suppose the source data (or features) $\mathbf{X}^\mathcal{S}$ consists of $c$ classes and denote. For each target sample $x$ with pseudo label i and any class label $j$, we define the binary OT score as $$g_j(x) := \max_{y \in \mathbf{X}^{\mathcal{S}_i}} \min_{z \in \mathbf{X}^{\mathcal{S}_j}} \tilde{d}(x, z) - \tilde{d}(x, y),$$ where $\mathbf{X}^{\mathcal{S}_i}$ denotes the subset of source data (or features) belonging to class $i$, and $\tilde{d}(\cdot, \cdot)$ requires computing the semi-discrete OT. The OT score is defined as $$g(x) := \min_j g_j(x).$$
\end{definition}

For $\bm w=(w_1,\ldots,w_{N_\mathcal S})\in\mathbb{R}^{N_\mathcal S}$, $\varepsilon>0$, and each source prototype $z_j^\mathcal S$, we define
the entropically smoothed indicator function of the Laguerre cell $L_w(z_j^\mathcal S)$ by
\[
\chi_j^\varepsilon(x,\bm w)
:=
\frac{\exp\!\left(\frac{-\|x-z_j^\mathcal S\|+w_j}{\varepsilon}\right)}
{\sum_{\ell=1}^{N_S}\exp\!\left(\frac{-\|x-z_\ell^\mathcal S\|+w_\ell}{\varepsilon}\right)}.
\] We summarize our OT score computation in Algorithm \ref{alg:OT_score}.  We represent the source distribution by class-wise mean features. Accordingly, the definition of $g_j$ simplifies to \(g_j(x)=\tilde{d}(x,f_j)-\tilde{d}(x,f_i)\), where \(f_i\) and \(f_j\) are the mean features of classes \(i\) and \(j\), respectively. Under this setting, we show that classification accuracy increases as samples with low OT scores are filtered out. The details are provided in Appendix~\ref{app:proofs}.
\begin{theorem}
    Let $\nu_1, \nu_2$ be the continuous probability measures with means $m_1$ and $m_2$, respectively and $\hat{\mu}_i$ consists of singletons $f_i$. Denote $\nu:=\frac{1}{2}\nu_1+\frac{1}{2}\nu_2$ and $\hat{\mu}:=\frac{1}{2}\hat\mu_1+\frac{1}{2}\hat\mu_2$. Let $X_i$ be a random variable distributed according to $\nu_i$, and $Y_i$ denotes its ground-truth class label, i.e., $Y_i=i$. Suppose $\nu_i(|X_i - m_i| \geq t) \leq 2 \exp\left(-\frac{t^2}{2\sigma^2}\right)$ and $\|m_1-f_1\| + \|m_2-f_2\| < \|m_1-f_2\| + \|m_2-f_1\| $, then $P\Big( T_\nu^{\hat \mu}(X_i) \neq Y_i|g(X_i)>g\Big) \leq 2 \exp\left(-\frac{\min_{i=1,2}\dist(m_i,\cS)^2}{2\sigma^2}\right)$, where 
    
    (1) $\cS: = \Bigl\{  
x: \|x-f_1\| - (w^*+g) =   \|x-f_{2}\| \Bigr\}$

    (2) $d:=\|f_2-f_1\|, 
      e:=\frac{f_2-f_1}{d},$ and for each $i\in\{1,2\}$, let $m_i=\alpha_i e+u_i,\;u_i\perp e$ be the orthogonal decomposition of $m_i$ and denote $\rho_i:=\|u_i\|$.
      
    (3)$\dist(m_i,\cS)=\min_{r\ge0}\sqrt{\bigl(t(r)-\alpha_i\bigr)^{2}+\bigl(r-\rho_i\bigr)^{2}}$ where  $t(r)$ is defined through $$\sqrt{t^{2}+r^{2}}
       \;=\;
         \sqrt{(t-d)^{2}+r^{2}}+(w^*+g),
       \qquad r\ge0.$$
   
  
 
\end{theorem}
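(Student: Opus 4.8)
The plan is to reduce the conditional misclassification event to a geometric statement about the mean $m_i$ and the deterministic surface $\cS$, and then apply the subgaussian concentration hypothesis. First I would record what the OT map does in this prototype setting: since $\hat\mu_i$ is the singleton $f_i$ and $\hat\mu=\tfrac12\delta_{f_1}+\tfrac12\delta_{f_2}$, the semi-discrete OT map $T_\nu^{\hat\mu}$ sends $x$ to whichever of $f_1,f_2$ minimizes $\tilde d_{\bm w^*}(x,\cdot)=\|x-\cdot\|-w^*_\cdot$; the mass-balance constraint forces $w^*_1-w^*_2=:w^*$ (a scalar) to be exactly the offset that splits $\nu$ in half, so the decision region between the two prototypes is the half-space boundary $\{x:\|x-f_1\|-w^*=\|x-f_2\|\}$. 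Next I would unpack the conditioning event $g(X_i)>g$. With mean features, $g_j(x)=\tilde d(x,f_j)-\tilde d(x,f_i)$ for the pseudo-label $i$, and $g(x)=\min_j g_j(x)$; in the binary case this is $g(x)=\tilde d_{\bm w^*}(x,f_{3-i})-\tilde d_{\bm w^*}(x,f_i)$. So $\{g(X_i)>g\}$ is precisely the event that $X_i$ lies strictly on the class-$i$ side of the shifted surface $\cS=\{x:\|x-f_1\|-(w^*+g)=\|x-f_2\|\}$ (for $i=1$; symmetric for $i=2$), i.e.\ $X_i$ is at signed ``margin'' at least $g$ from the true OT decision boundary.

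The core step is then: conditioned on $X_i$ lying on the correct side of $\cS$, a misclassification $T_\nu^{\hat\mu}(X_i)\neq i$ requires $X_i$ to have crossed back over the \emph{unshifted} boundary $\{\|x-f_1\|-w^*=\|x-f_2\|\}$, which sits between $f_i$ and $\cS$. Hence the event $\{T_\nu^{\hat\mu}(X_i)\neq Y_i,\ g(X_i)>g\}$ is contained in $\{X_i\ \text{on the wrong side of }\cS\}^c\cap\{X_i\ \text{on wrong side of the true boundary}\}$, and in particular $X_i$ must be at Euclidean distance at least $\dist(m_i,\cS)$ from $m_i$ — because moving from $m_i$ (which, by the hypothesis $\|m_1-f_1\|+\|m_2-f_2\|<\|m_1-f_2\|+\|m_2-f_1\|$, lies on the correct side) to any point on the wrong side of the relevant boundary forces crossing $\cS$. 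Wait — I should be careful about which mean is on which side; the hypothesis is exactly what guarantees $m_i$ is classified correctly by the $f$-prototypes, so that the ``budget'' $\dist(m_i,\cS)$ is positive. Then $P(T_\nu^{\hat\mu}(X_i)\neq Y_i\mid g(X_i)>g)\le P(\|X_i-m_i\|\ge \dist(m_i,\cS)) / P(g(X_i)>g)$; one either absorbs the denominator by noting we want an upper bound and $P(g(X_i)>g)$ can be taken as $1$ in the regime of interest, or states the bound as a bound on the joint-over-conditioning ratio — I would follow whichever convention the paper's Appendix uses, most cleanly by bounding the numerator $\nu_i(\|X_i-m_i\|\ge \dist(m_i,\cS))\le 2\exp(-\dist(m_i,\cS)^2/2\sigma^2)$ directly via the subgaussian tail, and taking the min over $i=1,2$.

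Finally I would verify the closed form for $\dist(m_i,\cS)$ in items (2)–(3). Working in cylindrical coordinates aligned with $e=(f_2-f_1)/d$ and writing a point as $(t,r)$ with $t$ the signed coordinate along $e$ (origin at $f_1$) and $r$ the distance to the axis, the surface $\cS$ is rotationally symmetric about the $f_1f_2$ axis and is the locus $\sqrt{t^2+r^2}=\sqrt{(t-d)^2+r^2}+(w^*+g)$ — a branch of a hyperboloid of revolution. So for each axial radius $r\ge0$ there is a unique $t(r)$ on $\cS$, and $\dist(m_i,\cS)=\min_{r\ge0}\sqrt{(t(r)-\alpha_i)^2+(r-\rho_i)^2}$ with $\alpha_i,\rho_i$ the axial/radial coordinates of $m_i$; this is just the distance from $(\alpha_i,\rho_i)$ to the planar curve $r\mapsto(t(r),r)$, exploiting that the nearest point on a surface of revolution to a given point lies in the half-plane through that point and the axis. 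The main obstacle is the bookkeeping around orientation and the conditioning denominator: one must pin down, using the separation hypothesis, that $m_i$ genuinely lies on the class-$i$ side of \emph{both} $\cS$ and the true boundary (so that the crossing argument gives a nonnegative, and the right, distance budget), and handle the fact that $\cS$ need not be a hyperplane — all the monotonicity needed ($\cS$ separating $f_i$ from the bad region, $t(r)$ well-defined) follows from $w^*+g<d$, which is implicitly the nontrivial regime and should be stated. The subgaussian tail application itself is then immediate.
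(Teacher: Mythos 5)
Your overall route—reduce the misclassification event to a geometric region separated from $m_i$ by a surface, apply the subgaussian tail at the distance from $m_i$ to that surface, and compute this distance in cylindrical coordinates about the $f_1f_2$ axis via the unique solution $t(r)$—is the same as the paper's, and your handling of the rotational symmetry and of the conditioning denominator is, if anything, more explicit than the paper's. However, your core step misidentifies the event. The OT score is computed relative to the \emph{assigned} (pseudo) label, so on the joint event $\{T_\nu^{\hat\mu}(X_1)\neq 1,\ g(X_1)>g\}$ the point is assigned to $f_2$ with margin greater than $g$, i.e.\ $\|X_1-f_1\|-(w^*+g) > \|X_1-f_2\|$: the event is the region strictly \emph{beyond} $\cS$, the complement of the enlarged Laguerre cell of $f_1$, which is exactly what the paper bounds ($\nu_1(\mathbb{L}_{w^*+g}(f_1))\geq 1-2\exp(-\dist(m_1,\cS)^2/2\sigma^2)$). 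You instead claim the event is contained in the set of points lying on the correct side of $\cS$ but on the wrong side of the unshifted boundary—the slab between the two surfaces—which would only be right if the margin were measured relative to the \emph{true} class. From that containment your conclusion that $X_i$ must be at distance at least $\dist(m_i,\cS)$ from $m_i$ does not follow: reaching the slab from $m_i$ only requires crossing the unshifted boundary $\{x:\|x-f_1\|-w^*=\|x-f_2\|\}$, which is closer to $m_i$ than $\cS$ is, so your argument as written yields only the weaker exponent with the distance to the unshifted boundary and cannot produce the stated bound. (Note also that under your reading, where $g(X_i)$ is the margin toward the true class, the joint event would actually be empty for $g>0$, which shows the reading cannot be the intended one.)

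Once the event is identified correctly, the rest of your plan goes through essentially as in the paper: the separation hypothesis $\|m_1-f_1\|+\|m_2-f_2\|<\|m_1-f_2\|+\|m_2-f_1\|$ is what places $m_i$ on its own side, and since $g\ge 0$ only enlarges the cell of $f_1$, being on the correct side of the unshifted boundary automatically puts $m_1$ on the correct side of $\cS$; then any point of the bad region is at distance at least $\dist(m_1,\cS)$ from $m_1$ and the subgaussian tail applies. Your cylindrical-coordinate reduction of $\dist(m_i,\cS)$ (uniqueness of $t(r)$ by monotonicity of $\sqrt{t^2+r^2}-\sqrt{(t-d)^2+r^2}$ in $t$, minimizing the in-plane distance $\sqrt{(t(r)-\alpha_i)^2+(r-\rho_i)^2}$ over $r\ge 0$) is exactly the paper's computation, so only the event identification needs to be repaired.
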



\section{Applications and Empirical Evaluation}
\label{sec:app_and_emp}
\begin{table}[t]
\centering
\caption{Evaluation of confidence scores based on AURC.}
\label{tab:aurc_GMM}

\begingroup
\small                            

\renewcommand{\arraystretch}{0.92}       %
\begin{tabular}{l l c c c c c}
\toprule
Dataset & Task & Maxprob & Ent & Cossim & JMDS & OT Score\\
\midrule
\multirow{13}{*}{Office‑Home}
  & Ar $\rightarrow$ Cl & 0.3485 & 0.3592 & 0.3013 & \underline{0.2885} & \textbf{0.2623} \\
  & Ar $\rightarrow$ Pr & 0.1697 & 0.1789 & 0.1297 & \underline{0.1237} & \textbf{0.1208} \\
  & Ar $\rightarrow$ Rw & 0.1032 & 0.1133 & 0.0897 & \underline{0.0797} & \textbf{0.0770} \\
  & Cl $\rightarrow$ Ar & 0.2686 & 0.2849 & \underline{0.2045} & 0.2362 & \textbf{0.2020} \\
  & Cl $\rightarrow$ Pr & 0.1916 & 0.2027 & \textbf{0.1182} & 0.1483 & \underline{0.1424} \\
  & Cl $\rightarrow$ Rw & 0.1703 & 0.1837 & \underline{0.1180} & 0.1275 & \textbf{0.1179} \\
  & Pr $\rightarrow$ Ar & 0.2629 & 0.2753 & \textbf{0.1977} & 0.2123 & \underline{0.2063} \\
  & Pr $\rightarrow$ Cl & 0.3910 & 0.4052 & \textbf{0.3189} & \underline{0.3193} & 0.3249 \\
  & Pr $\rightarrow$ Rw & 0.0997 & 0.1085 & \underline{0.0757} & 0.0786 & \textbf{0.0741} \\
  & Rw $\rightarrow$ Ar & 0.1516 & 0.1621 & \underline{0.1315} & 0.1369 & \textbf{0.1167} \\
  & Rw $\rightarrow$ Cl & 0.3339 & 0.3463 & 0.2873 & \underline{0.2664} & \textbf{0.2539} \\
  & Rw $\rightarrow$ Pr & 0.0731 & 0.0796 & \underline{0.0639} & 0.0737 & \textbf{0.0557} \\
  & Avg.\                & 0.2137 & 0.2250 & \underline{0.1697} & 0.1743 & \textbf{0.1628} \\
\midrule
\multirow{1}{*}{VisDA‑2017}
  & T $\rightarrow$ V   & 0.3071 & 0.3203 & 0.2780 & \underline{0.2021} & \textbf{0.1704} \\
\midrule
\multirow{7}{*}{ImageCLEF-DA}
  & C $\rightarrow$ I   & 0.0515 & 0.0570 & \textbf{0.0181} & 0.0325 & \underline{0.0252} \\
  & C $\rightarrow$ P   & 0.1902 & 0.1991 & 0.1579 & \underline{0.1459} & \textbf{0.1143} \\
  & I $\rightarrow$ C   & 0.0099 & 0.0131 & \underline{0.0038} & 0.0055 & \textbf{0.0036} \\
  & I $\rightarrow$ P   & 0.1198 & 0.1221 & 0.1280 & \underline{0.1170} & \textbf{0.1000} \\
  & P $\rightarrow$ C   & 0.0260 & 0.0303 & \textbf{0.0062} & 0.0216 & \underline{0.0092} \\
  & P $\rightarrow$ I   & 0.0347 & 0.0382 & \textbf{0.0177} & 0.0276 & \underline{0.0186} \\
  & Avg.\               & 0.0720 & 0.0766 & \underline{0.0553} & 0.0583 & \textbf{0.0452} \\
\bottomrule
\end{tabular}
\endgroup
\end{table}
In this section, we present: (i) an Area Under the Risk--Coverage Curve (AURC) evaluation across confidence scores (Section~\ref{sec:aurc_comparison}); (ii) a prototype-assisted SFUDA application using the OT score for training-time reweighting to improve accuracy (Section~\ref{sec:ot-reweight}); and (iii) a label-free model-selection analysis showing that the mean OT score on the target set correlates with final accuracy (Section~\ref{sec:model_comparison}). Additional details and results are provided in Appendix~\ref{sec:experiments}.

\subsection{AURC Comparisons}
\label{sec:aurc_comparison}
To demonstrate the effectiveness of the proposed OT score, we compare it against several widely-used confidence estimation methods, including Maxprob, Entropy (Ent), and JMDS. The evaluation is conducted on four standard UDA benchmarks: Digits, Office-Home, ImageCLEF-DA, and VisDA-17. We compute confidence scores in the feature space extracted by the last layer of our neural network.

For evaluation, we adopt the Area Under the Risk-Coverage Curve (AURC) proposed by \cite{geifman2018bias, ding2020revisiting} and subsequently employed in \cite{lee2022confidence}. Specifically, after obtaining the high-confidence subset $X_h^\mathcal{T}:=\left\{x_i^\mathcal{T} \mid s\left(x_i^\mathcal{T}, \hat{y}_i^\mathcal{T}\right)>h\right\}$, where $h$ is a predefined confidence threshold, the risk is computed as the average empirical loss over $X_h^\mathcal{T}$, and the coverage corresponds to  $\left|X_h^\mathcal{T}\right| /\left|X^\mathcal{T}\right|$. A lower AURC value indicates higher confidence reliability, as it implies a lower prediction risk at a given coverage level. Notably, when the $0/1$ loss is applied, a high AURC reflects a high error rate among pseudo-labels, thus indicating poor correctness and calibration of the confidence scores.

Maxprob and Ent use labels assigned by the pretrained source classifier while Cossim, JMDS, OT score receive pseudo labels from a Gaussian Mixture Model (GMM), following the same setup of \cite{lee2022confidence}. 

To further assess the robustness of the proposed OT score under varying pseudo-label quality, we consider another case where the pseudo labels are generated by the DSAN algorithm \citep{zhu2020deep}. Under this setting, only Cossim and OT score are capable of incorporating externally generated high-quality pseudo labels. Table \ref{tab:aurc_DSAN_complete} in Appendix \ref{sec:experiments} shows the significant benefits of leveraging high quality pseudo labels. The OT score achieves the lowest AURC value in most adaptation tasks across the considered scenarios.

\subsection{OT Score Reweighting}
\label{sec:ot-reweight}
We integrate the OT score into CoWA-JMDS~\citep{lee2022confidence} as a per-sample weight for pseudo-labeled target instances. For each target sample \(x_i\), we set
\[
w_i \;=\; 2\cdot\mathrm{OT}(x_i)\, \cdot\mathrm{JMDS}(x_i),
\]
where \(\mathrm{JMDS}(x_i)\) is computed online from features during training, while \(\mathrm{OT}(x_i)\) is computed from features extracted by the \emph{pre-adaptation} model, thereby decoupling the confidence signal from the evolving target representation. Relying solely on the same training-time features that are continually updated by pseudo-labels risks self-reinforcement (confirmation bias): incorrect pseudo-labels \(\rightarrow\) representation drift \(\rightarrow\) inflated “confidence” \(\rightarrow\) further amplification. We mitigate this by computing the OT score from pre-adaptation features, which constrains the pseudo-label feedback loop and reduces confirmation bias. Here, the OT score is normalized to \([0,1]\); the prefactor \(2\) offsets the dynamic-range compression induced by the product of two numbers in \([0,1]\). 

This integration yields higher accuracy than the original CoWA-JMDS. We evaluate on \emph{Office-Home} (Tables~\ref{tab:officehome_sfuda_compact}) and \emph{VisDA-2017} (Tables~\ref{tab:visda_sfuda_compact}) in the SFUDA setting with source class-mean features, reporting target-domain accuracy averaged over three seeds (see Appendix~\ref{sec:experiments}). Training settings (backbone, optimizer, pseudo-labeling) follow \citet{lee2022confidence}; the only change is the per-sample weight \(w_i\). 

We additionally evaluate on \emph{DomainNet}, a widely used and more challenging domain adaptation benchmark, to strengthen the empirical evidence and demonstrate robustness under more severe distribution shifts. As shown in Table~\ref{tab:domainnet}, our proposed method achieves the most significant performance improvement on this challenging benchmark.

\begin{table}[t]
\centering
\scriptsize
\setlength{\tabcolsep}{2pt}
\begin{tabularx}{\linewidth}{p{2.9cm}*{12}{Y}Y}
\toprule
Method & Ar$\to$Cl & Ar$\to$Pr & Ar$\to$Rw & Cl$\to$Ar & Cl$\to$Pr & Cl$\to$Rw & Pr$\to$Ar & Pr$\to$Cl & Pr$\to$Rw & Rw$\to$Ar & Rw$\to$Cl & Rw$\to$Pr & Avg \\
\midrule
BAIT \citep{yang2020unsupervised}               & 57.4 & 77.5 & 82.4 & 68.0 & 77.2 & 75.1 & 67.1 & 55.5 & 81.9 & 73.9 & 59.5 & 84.2 & 71.6 \\
SHOT \citep{liang2020we}               & 57.1 & 78.1 & 81.5 & 68.0 & 78.2 & 78.1 & 67.4 & 54.9 & 82.2 & 73.3 & 58.8 & 84.3 & 71.8 \\
NRC \citep{yang2021exploiting}                & 57.7 & 80.3 & 82.0 & 68.1 & 79.8 & 78.6 & 65.3 & 56.4 & 83.0 & 71.0 & 58.6 & 85.6 & 72.2 \\
ELR \citep{yi2023source}             & 58.4  & 78.7  & 81.5  & 69.2  & 79.5  & 79.3  & 66.3  & 58.0  & 82.6  & 73.4  & 59.8  & 85.1  & 72.6 \\

DATUM \citep{benigmim2023one}                      & 55.3 & 76.8 & 79.3 & 65.1 & 77.7 & 78.6 & 62.4 & 52.1 & 79.7 & 66.6 & 55.9 & 80.5 & 69.2 \\

PS \citep{du2024generation}                         & 57.8 & 77.3 & 81.2 & 68.4 & 76.9 & 78.1 & 67.8 & 57.3 & 82.1 & 75.2 & 59.1 & 83.4 & 72.1 \\

CPD  \citep{zhou2024source}            & 59.1  & 79.0  & 82.4  & 68.5  & 79.7  & 79.5  & 67.9  & 57.9  & 82.8  & 73.8  & 61.2  & 84.6  & 73.0 \\
\midrule
CoWA \citep{lee2022confidence}   & 56.9 & 78.4 & 81.0 & 69.1 & 80.0 & 79.9 & 67.7 & 57.2 & 82.4 & 72.8 & 60.5 & 84.5 & 72.5 \\
OTScore              & 58.0  & 79.6  & 81.5  & 69.6  & 80.2  & 80.0  & 68.3  & 57.6  & 82.3  & 73.2  & 61.1  & 84.7  & 73.0 \\
\bottomrule
\end{tabularx}
\caption{Accuracy (\%) on Office-Home (ResNet-50).}
\label{tab:officehome_sfuda_compact}
\end{table}

\begin{table}[t]
\centering
\scriptsize
\setlength{\tabcolsep}{2pt}
\begin{tabularx}{\linewidth}{p{2.9cm}*{12}{Y}Y}
\toprule
Method & plane & bcycl & bus & car & horse & knife & mcycl & person & plant & sktbrd & train & truck & Avg \\
\midrule
SFIT \citep{hou2021visualizing}              & 94.3 & 79.0 & 84.9 & 63.6 & 92.6 & 92.0 & 88.4 & 79.1 & 92.2 & 79.8 & 87.6 & 43.0 & 81.4 \\
SHOT  \citep{liang2020we}             & 94.3 & 88.5 & 80.1 & 57.3 & 93.1 & 94.9 & 80.7 & 80.3 & 91.5 & 89.1 & 86.3 & 58.2 & 82.9 \\
NRC  \citep{yang2021exploiting}              & 96.8 & 91.3 & 82.4 & 62.4 & 96.2 & 95.9 & 86.1 & 80.6 & 94.8 & 94.1 & 90.4 & 59.7 & 85.9 \\
AdaCon \citep{chen2022contrastive}     & 97.0 & 84.7 & 84.0 & 77.3 & 96.7 & 93.8 & 91.9 & 84.8 & 94.3 & 93.1 & 94.1 & 49.7 & 86.8 \\
ELR  \citep{yi2023source}            & 97.3  & 89.1  & 89.8  & 79.2  & 96.9  & 97.5  & 92.2  & 82.5  & 95.8  & 94.5  & 87.3  & 34.5  & 86.4 \\
CPD    \citep{zhou2024source}          & 96.7  & 88.5  & 79.6  & 69.0  & 95.9  & 96.3  & 87.3  & 83.3  & 94.4 & 92.9  & 87.0  & 58.7  & 85.8 \\

TPDS \citep{tang2024source}                 & 97.6 & 91.5 & 89.7 & 83.4 & 97.5 & 96.3 & 92.2 & 82.4 & 96.0 & 94.1 & 90.9 & 40.4 & 87.6 \\

\midrule
CoWA \citep{lee2022confidence}   & 96.2 & 89.7 & 83.9 & 73.8 & 96.4 & 97.4 & 89.3 & 86.8 & 94.6 & 92.1 & 88.7 & 53.8 & 86.9 \\
OTScore              & 95.6  & 89.0  & 82.8  & 78.3  & 96.3  & 98.0  & 91.2  & 86.8  & 95.5  & 94.7  & 89.9  & 55.7  & 87.8\\
\bottomrule
\end{tabularx}
\caption{Accuracy (\%) on VisDA-2017 (ResNet-101).}
\label{tab:visda_sfuda_compact}
\end{table}

\begin{table}[t]
\centering
\scriptsize
\setlength{\tabcolsep}{2pt}
\begin{tabularx}{\linewidth}{p{3.2cm}*{7}{Y}Y}
\toprule
Method & S$\to$P & C$\to$S & P$\to$C & P$\to$R & R$\to$S & R$\to$C & R$\to$P & Avg \\
\midrule
SHOT \citep{liang2020we}                      & 66.1 & 60.1 & 66.9 & 80.8 & 59.9 & 67.7 & 68.4 & 67.1 \\
NRC \citep{yang2021exploiting}                       & 65.7 & 58.6 & 64.5 & 82.3 & 58.4 & 65.2 & 68.2 & 66.1 \\
AaD \citep{yang2022attracting}                       & 65.4 & 54.2 & 59.8 & 81.8 & 54.6 & 60.3 & 68.5 & 63.5 \\
AdaCon \citep{chen2022contrastive}               & 65.9 & 58.0 & 68.6 & 80.5 & 61.5 & 70.2 & 69.8 & 67.8 \\
PLUE \citep{litrico2023guiding}                      & 67.5 & 64.0 & 68.8 & 76.5 & 65.7 & 74.2 & 70.4 & 69.6 \\

TPDS \citep{tang2024source}                      & 64.3 & 59.8 & 65.6 & 79.0 & 58.2 & 66.4 & 67.0 & 65.8 \\

SF(DA)$^2$ \citep{hwang2024sf}                & 67.7 & 59.6 & 67.8 & 83.5 & 60.2 & 68.8 & 70.5 & 68.3 \\
UCon-SFDA \citep{xu2025revisiting}                 & 68.1 & 66.5 & 69.3 & 81.0 & 64.3 & 75.2 & 71.1 & 70.8 \\
\midrule
CoWA \citep{lee2022confidence}                      & 65.8 & 60.6 & 66.2 & 79.8 & 60.0 & 69.0 & 67.2 & 66.9   \\
OTScore                            & 70.2   & 63.9   & 70.5   & 82.8   & 64.9   & 72.4   & 71.5   & 70.9   \\
\bottomrule
\end{tabularx}
\caption{Accuracy (\%) on DomainNet-126 (ResNet-50).}
\label{tab:domainnet}
\end{table}

\subsection{Model Comparison}
\label{sec:model_comparison}

The OT score also serves as a \emph{label-free} proxy for adaptation performance. This is particularly valuable when target labels are unavailable, as training accuracy on noisy pseudo-labels can be a misleading indicator \citep{zhang2016understanding}. At the end of adaptation training, we compute the mean OT score over the target set predictions. As shown in Fig.~\ref{fig:ot_plot}, for a fixed source domain, the mean OT score provides an ordinal proxy of post-adaptation accuracy across targets: higher mean OT corresponds to higher accuracy. Moreover, comparing \textit{MNIST}\(\rightarrow\)\textit{USPS} with \textit{FLIP-USPS}\(\rightarrow\)\textit{USPS} shows that a source model obtained via pixel-value inversion (FLIP-USPS) yields substantially lower SFUDA performance than using MNIST as the source as shown in Table~\ref{tab:flip_usps_vs_mnist}.

\begin{figure}[H]
  \begin{flushleft}

  \begin{minipage}{0.55\linewidth}
    \centering
    \includegraphics[width=0.85\linewidth]{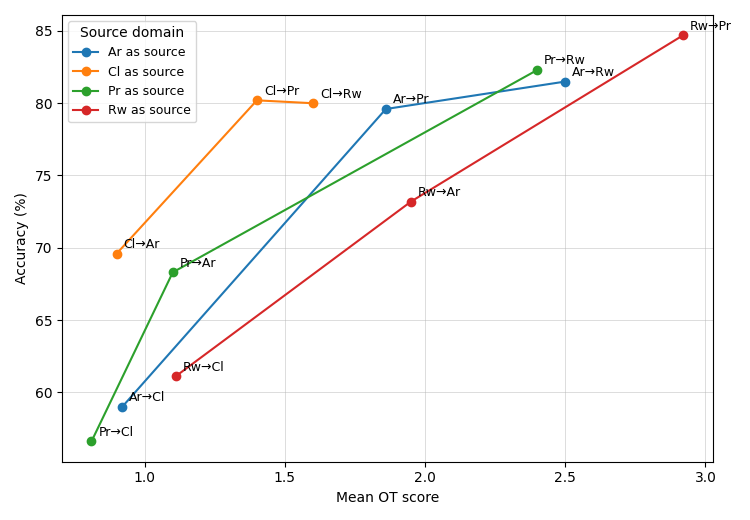}
    \captionof{figure}{Mean OT Score vs. accuracy on Office-Home. Lines connect targets sharing the same source. Points denote individual  target domains.}
    \label{fig:ot_plot}
  \end{minipage}
  \hspace*{0pt}
  \begin{minipage}{0.35\linewidth}
  \centering
  \small
  \captionof{table}{Accuracy (\%) on \textit{USPS} with different sources.}
  \label{tab:flip_usps_vs_mnist}
  \begin{tabular}{lcc}
    \toprule
    Source & Mean Score & Accuracy (\%) \\
    \midrule
    \textit{MNIST}      & 4.02 & 94.7 \\
    \textit{FLIP-USPS}  & 0.55 & 47.8 \\
    \bottomrule
  \end{tabular}
\end{minipage}
\end{flushleft}
\end{figure}

\subsection{Sensitivity to Entropic Regularization and Dual Optimization Stability}
\label{sec:ablation}
We analyze the sensitivity of the OT-score computation to the entropic regularization parameter $\epsilon$ and examine the stability of the corresponding semi-discrete OT dual optimization. To assess whether the method is sensitivity to this choice, we conduct an ablation on the Office-Home Art $\rightarrow$ Clipart task with
\[
\epsilon \in \{10^{-8}, 10^{-6}, 10^{-4}, 10^{-3}, 10^{-2}, 10^{-1}, 5\times 10^{-1}\}.
\]

Following our semi-discrete OT formulation, we start the optimization for each candidate $\epsilon$ from the same warm-start initialization, so that differences in convergence behavior are attributable to $\epsilon$ rather than initialization. During optimization, we monitor three quantities:

\paragraph{Marginal residual.}
We measure the violation of the source marginal constraint by
\[
r_t
=
\left\|
\left(
\frac{1}{N_{\mathcal{T}}}
\sum_{i=1}^{N_{\mathcal{T}}}
\chi_j^\epsilon(x_i^{\mathcal{T}},\bm{w}_t)
\right)_{j=1}^{N_{\mathcal{S}}}
-
\bm{a}
\right\|_2,
\]
where $\bm{a}=(a_1,\dots,a_{N_{\mathcal{S}}})^\top$ denotes the source weights and
$\chi_j^\epsilon(x_i^{\mathcal{T}},\bm{w}_t)$ is the smoothed indicator function in Algorithm~\ref{alg:OT_score}. A decreasing residual indicates that the semi-discrete transport plan increasingly satisfies the prescribed source marginal.

\begin{figure}[t]
    \centering
    \subfigure[Marginal residual\label{fig:marginal}]{
        \includegraphics[width=0.31\textwidth]{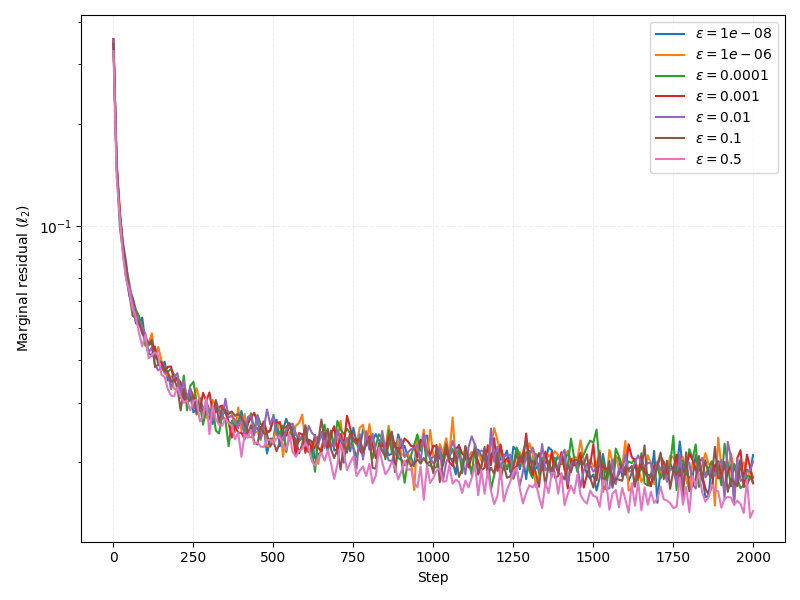}
    }
    \hfill
    \subfigure[Dual update norm\label{fig:update}]{
        \includegraphics[width=0.31\textwidth]{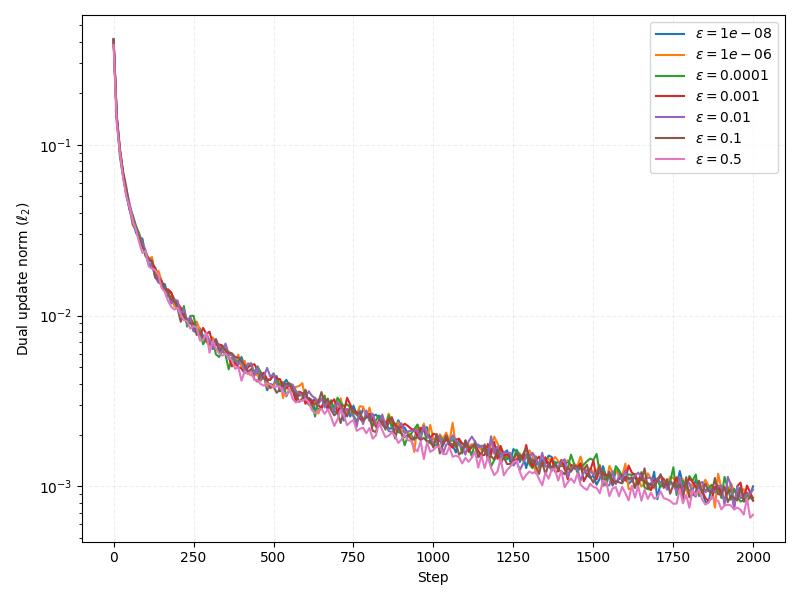}
    }
    \hfill
    \subfigure[Dual objective\label{fig:objective}]{
        \includegraphics[width=0.31\textwidth]{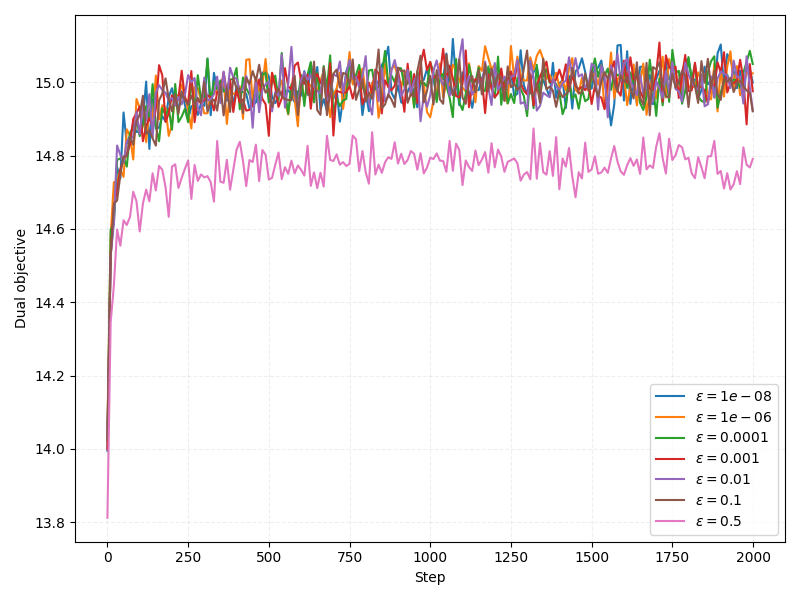}
    }
    \caption{Sensitivity of the OT-score optimization to the entropic regularization parameter $\epsilon$.}
    \label{fig:three_in_a_row}
\end{figure}
\paragraph{Dual update norm.}
To quantify the stability of the dual iterates, we record
\[
\Delta_t = \|\bm{w}_{t+1}-\bm{w}_t\|_2.
\]
A decaying update norm indicates that the dual variables stabilize as optimization proceeds.

\paragraph{Dual objective.}
We also track the empirical dual objective
\[
\mathcal{L}_t
=
\langle \bm{w}_t, \bm{a} \rangle
-
\epsilon\,
\frac{1}{N_{\mathcal{T}}}
\sum_{i=1}^{N_{\mathcal{T}}}
\log
\sum_{j=1}^{N_{\mathcal{S}}}
\exp\!\left(
\frac{(\bm{w}_t)_j-\|x_i^{\mathcal{T}}-z_j^{\mathcal{S}}\|}{\epsilon}
\right).
\]
This metric provides a direct view of the optimization trajectory under different regularization levels.

Across all tested values of $\epsilon$, the optimization exhibits stable convergence behavior. In particular, the marginal residual decreases consistently, the dual update norm decays toward a small value, and the dual objective rapidly stabilizes after around 250 steps. These trends indicate that the OT-score computation is not sensitive to the exact choice of $\epsilon$ over a reasonably broad range. Smaller values of $\epsilon$ lead to sharper transport assignments, while larger values produce smoother updates, but we do not observe unstable or divergent dual dynamics in the tested regime.

Overall, this suggests that the proposed OT-score computation is robust to the entropic regularization parameter and that the dual optimization remains well-behaved across multiple orders of magnitude of $\epsilon$. The default choice $\epsilon=10^{-4}$ therefore provides a strong practical setting, while the algorithm does not rely on fine-tuning this parameter to maintain stable convergence.

\section{Conclusion and Future Work}
We investigate theoretical guarantees about allowed distribution shifts in order to have a label-preserving OT. Using semi-discrete OT, we derive the OT score which considers the decision boundary induced by the OT alignment. The definition of OT score can be easily extended to other cost functions other than the standard Euclidean norm. Additionally, confidence scores are helpful for training-time sample reweighting and model comparison.

Currently, we address class imbalance in the OT-score computation by weighting the source class mean features with class proportions estimated from pseudo labels. However, when pseudo labels are unreliable, these estimates can be biased. Under the assumptions in Section~\ref{app:unbalanced_classes}, we show that the OT objective is minimized when the source and target class proportions coincide (see Theorem~\ref{cor:optimal_weight}). A natural next step is to model and propagate class-proportion uncertainty into the confidence score.

\bibliography{main}
\bibliographystyle{tmlr-style-file-main/tmlr}
\clearpage
\appendix
\section{Related Works}
\label{app:related_work}

\textbf{Theory about DA:} Several theoretical works have investigated the learnability and generalization guarantees of domain adaptation (DA). Specifically, \cite{ben2012hardness}  analyzes the DA learnability problem and sample complexity under the standard VC-dimension framework, and identifies a setting in which no algorithm can successfully solve the DA problem. In a related direction, \cite{redko2019analysis}  provides a theoretical analysis about the existence  of a hypothesis that performs well across both source and target domains, and further establishes finite-sample approximation properties of the $\lambda$ term. \cite{le2021lamda} alleviates the label mismatching problem by searching for a transformation $T$ that satisfies the following conditions: (1) $T_\# \mu_S = \mu_T$, and (2) $T$ preserves the labels.

\textbf{Confidence Scores}: Uncertainty estimation and confidence score have been prevalently employed in machine learning to improve model robustness. In particular, ordinal ranking techniques have been commonly used for selective classification (\cite{lakshminarayanan2017simple}; \cite{geifman2017selective}; \cite{mandelbaum2017distance}; \cite{nair2020exploring}), where the goal is to prioritize or filter samples based on their confidence scores in order to exclude low-confidence samples during training. 
\cite{karim2023c} select reliable pseudo-labels by thresholding the maximum softmax probability of the teacher’s augmentation-averaged prediction. 
\cite{litrico2023guiding} reweight the classification loss by entropy, assigning higher weights to low-entropy (more confident) samples. 
\cite{lee2022confidence} propose the JMDS score to effectively identify low-confidence samples, thereby enhancing the reliability of the DA process. However, most existing confidence scores rely primarily on cluster-level information in the feature space, without explicitly modeling the geometric relationship between domains. In contrast, our proposed OT score take into account the geometry induced by the OT map, establishing a stronger connection between the source and target domains when computing confidence scores.
\section{Confidence Scores}
\label{app:confidence_scores}
We provide details of the confidence scores used for comparison. Let $x_i^\mathcal{T}$ denote the $i$-th target sample, and let $p_\mathcal{S}$ represent the class probability predicted by the pretrained source model. Here, $K$ is the total number of classes, and $C_{\hat{y}_i^\mathcal{T}}$ denotes the center of the cluster corresponding to the predicted label $\hat{y}_i^\mathcal{T}$ for $x_i^\mathcal{T}$.
\begin{align*}
&\begin{gathered}
\operatorname{Maxprob}\left(x_i^\mathcal{T}\right)=\max _c p_\mathcal{S}\left(x_i^\mathcal{T}\right)_c, \\
\operatorname{Ent}\left(x_i^\mathcal{T}\right)=1+\frac{\Sigma_{c=1}^K p_\mathcal{S}\left(x_i^\mathcal{T}\right)_c \log p_\mathcal{S}\left(x_i^\mathcal{T}\right)_c}{\log K}, \\
\operatorname{Cossim}\left(x_i^\mathcal{T}\right)=\frac{1}{2}\left(1+\frac{\left\langle x_i^\mathcal{T}, C_{\hat{y}_i^\mathcal{T}}\right\rangle}{\left\|x_i^\mathcal{T}\right\|\left\|C_{\hat{y}_i^\mathcal{T}}\right\|}\right). \\
\end{gathered}
\end{align*}

JMDS score is computed by
$\operatorname{JMDS}\left(x_i^\mathcal{T}\right)=\operatorname{LPG}\left(x_i^\mathcal{T}\right) \cdot \operatorname{MPPL}\left(x_i^\mathcal{T}\right)$. LPG is the Log-Probability Gap computed from log data-structure-wise probability $\log p_{\text {data }}\left(x_i^\mathcal{T}\right)$ using GMM on the target feature space. MPPL provides high scores for samples whose GMM pseudo-label is the same based on $p_\mathcal{S}\left(x_i^\mathcal{T}\right) $ and $p_{\text {data }}\left(x_i^\mathcal{T}\right)$. Details about JMDS score can be found in \cite{lee2022confidence}.

\section{Proofs}
\label{app:proofs}
\begin{proof}[Proof of \Cref{thm:correct_classification}]
    
    Due to the concentration assumptions on $\mu$ and $\nu$, we can pick sets $E_{\mu_i}$ and $E_{\nu_i}$ such that $\mu_1(E_{\mu_1}) = \mu_2(E_{\mu_2}) \geq 1 - \epsilon$. So $\frac{1}{2}+\frac{1}{2}\epsilon\geq\mu(E_{\mu_i})\geq \frac{1}{2} - \frac{1}{2}\epsilon$. The same holds for $\nu(E_{\nu_i})$.

    Consider $F_i = (T_\nu^\mu)^{-1}(E_{\mu_i})$, we have $\nu(F_i) = \mu(E_{\mu_i}) \geq \frac{1}{2} - \frac{1}{2}\epsilon$ as well. Let $F=F_1\cup F_2$. So $E_{\nu_i} \cap F$ is a bounded set with 
    \begin{align}
        \frac{1}{2}+\frac{1}{2}\epsilon\geq\nu(E_{\nu_i} \cap F) &= \nu(E_{\nu_i})-\nu(E_{\nu_i} \cap F^c) \\
        &\geq \frac{1}{2} - \frac{1}{2}\epsilon - \nu(F^c) \\
        &\geq \frac{1}{2} - \frac{1}{2}\epsilon - \epsilon=\frac{1}{2} - \frac{3}{2}\epsilon
    \end{align}
    Without loss of generality, we assume $\nu(E_{\nu_1} \cap F) \geq \nu(E_{\nu_2} \cap F)$. Since $\nu \ll \mathcal{L}$, we can pick $R>0$ such that $\nu(E_{\nu_1} \cap F \cap B_R) = \nu(E_{\nu_2} \cap F)$. 

    Now consider the optimal transport map $T_\nu^\mu$ restricted on $(E_{\nu_1} \cap F \cap B_R) \cup (E_{\nu_2} \cap F)$. By \cite[Theorem 4.6]{villani2009optimal}, this restricted map is an optimal transport map between the marginal measures. 

    Since $\mu \big( T_\nu^\mu(E_{\nu_1} \cap F \cap B_R) \cup T_\nu^\mu (E_{\nu_2} \cap F) \big) = \nu \big( (E_{\nu_1} \cap F \cap B_R) \cup (E_{\nu_2} \cap F) \big) \geq 1 - 3\epsilon$, we get an estimate $\mu \big( (T_\nu^\mu(E_{\nu_1} \cap F \cap B_R) \cup T_\nu^\mu (E_{\nu_2} \cap F)) \cap E_{\mu_i} \big) \geq (1-3\epsilon) - (\frac{1}{2}+\frac{1}{2}\epsilon) = \frac{1}{2}-\frac{7}{2}\epsilon$. Therefore, 
    we can use Lemma~\ref{lem:unbalance} to conclude that with probability greater than $1-7\epsilon$, target samples will be correctly classified after optimal transportation.

\end{proof}

\begin{lemma}
\label{lem:unbalance}
    Suppose we have probability measures $\mu_i$ and $\nu_i$ with bounded support. Also assume that $\supp \mu_1$ and $\supp \mu_2$ are disjoint, $\supp \nu_1$ and $\supp \nu_2$ are disjoint. Let $r_{\mu_i}$ denote the diameter of the support of $\mu_i$ and set $l_i = d(\supp \mu_i, \supp \nu_i)$, $L_1 = d(\supp \mu_1, \supp \nu_2)$, $L_2 = d(\supp \mu_2, \supp \nu_1)$. Suppose $\mu := \frac{1}{2}\mu_1 + \frac{1}{2}\mu_2$, $\nu :=  p\nu_1 + (1-p)\nu_2$ for some $p\in (0, \frac{1}{2}]$. If $(r_{\mu_1} + r_{\nu_1} + l_1) + (r_{\mu_2} + r_{\nu_2} +  l_2) < L_1 + L_2$, then $T_\nu^\mu(\supp \nu_1) \subset \supp \mu_1$ up to a negligible set.
\end{lemma}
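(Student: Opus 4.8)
The plan is to argue by a swapping/contradiction argument using the cyclical monotonicity of the optimal plan. Suppose, for contradiction, that a non-negligible mass of $\supp\nu_1$ is transported by $T^\mu_\nu$ into $\supp\mu_2$. Since $\mu = \tfrac12\mu_1 + \tfrac12\mu_2$ and $\nu = p\nu_1 + (1-p)\nu_2$ have equal total mass with the two halves of $\mu$ each carrying mass $\tfrac12$, a counting/mass-balance argument forces a compensating non-negligible mass of $\supp\nu_2$ to be sent into $\supp\mu_1$: otherwise the preimage of $\supp\mu_2$ under $T^\mu_\nu$ would have $\nu$-measure strictly larger than $\tfrac12$ while being contained in $\supp\nu_1$ (mass $p \le \tfrac12$) together with too little of $\supp\nu_2$ to make up the deficit on the $\mu_1$ side. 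I would make this precise by setting $A_{ij} := \{x \in \supp\nu_i : T^\mu_\nu(x) \in \supp\mu_j\}$ and using $\nu(A_{11}) + \nu(A_{21}) = \mu(\supp\mu_1) = \tfrac12$ together with $\nu(A_{11}) + \nu(A_{12}) = p$; if $\nu(A_{12}) > 0$ then $\nu(A_{21}) > 0$ as well (and symmetrically if $\nu(A_{21})>0$ then $\nu(A_{12})>0$).

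Next I would pick points $x_1 \in A_{12}$ (so $T^\mu_\nu(x_1) =: y_1 \in \supp\mu_2$) and $x_2 \in A_{21}$ (so $T^\mu_\nu(x_2) =: y_2 \in \supp\mu_1$) at which cyclical monotonicity of the support of the optimal plan $(\mathrm{id}, T^\mu_\nu)_\#\nu$ holds; this is available up to a $\nu$-negligible set. Cyclical monotonicity gives
\[
\|x_1 - y_1\| + \|x_2 - y_2\| \;\le\; \|x_1 - y_2\| + \|x_2 - y_1\|.
\]
The plan is then to bound the two sides using the geometric hypotheses. On the left, $x_1 \in \supp\nu_1$ and $y_1 \in \supp\mu_2$, so $\|x_1 - y_1\| \ge L_2 = d(\supp\mu_2, \supp\nu_1)$; similarly $\|x_2 - y_2\| \ge L_1$. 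Hence the left side is at least $L_1 + L_2$. On the right, $x_1 \in \supp\nu_1$ and $y_2 \in \supp\mu_1$: using the triangle inequality through any reference point in $\supp\mu_1$ and in $\supp\nu_1$, one gets $\|x_1 - y_2\| \le r_{\nu_1} + l_1 + r_{\mu_1}$ (diameter of $\supp\nu_1$, plus the gap $l_1 = d(\supp\mu_1,\supp\nu_1)$, plus the diameter of $\supp\mu_1$). Symmetrically $\|x_2 - y_1\| \le r_{\nu_2} + l_2 + r_{\mu_2}$. So the right side is at most $(r_{\mu_1} + r_{\nu_1} + l_1) + (r_{\mu_2} + r_{\nu_2} + l_2)$, which by hypothesis is strictly less than $L_1 + L_2$. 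This contradicts cyclical monotonicity, so $\nu(A_{12}) = 0$, i.e.\ $T^\mu_\nu(\supp\nu_1) \subset \supp\mu_1$ up to a $\nu$-negligible set.

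The main obstacle I anticipate is the mass-balance bookkeeping that upgrades "some mass goes the wrong way" into "mass goes the wrong way in \emph{both} directions," which is what makes a two-point swap available; one must be careful that $p \le \tfrac12$ is used correctly and that the sets $A_{ij}$ partition things cleanly modulo $\nu$-null sets (using that $\supp\mu_1,\supp\mu_2$ are disjoint so $T^\mu_\nu(x)$ lands in exactly one of them for $\nu$-a.e.\ $x$, and similarly that $\supp\nu_1,\supp\nu_2$ are disjoint). A secondary technical point is justifying that cyclical monotonicity can be invoked at a pair of points with the prescribed membership — this follows from the optimal plan being concentrated on a cyclically monotone set and that set meeting $A_{12}\times(\supp\mu_2)$ and $A_{21}\times(\supp\mu_1)$ in positive measure, hence nonemptily. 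Everything else is triangle-inequality estimation.
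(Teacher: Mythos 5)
Your argument is correct, and it rests on the same underlying exchange idea as the paper's proof: some mass of $\supp\nu_1$ going to $\supp\mu_2$ forces, by the marginal constraint and $p\le\tfrac12$, compensating mass of $\supp\nu_2$ to go to $\supp\mu_1$, and swapping these two flows is strictly cheaper because $(r_{\mu_1}+r_{\nu_1}+l_1)+(r_{\mu_2}+r_{\nu_2}+l_2)<L_1+L_2$. The implementation differs, though. The paper makes the swap explicit at the level of sets: it extracts a subset $B'\subset\supp\nu_2$ of $\nu$-measure exactly $\delta$ (using $\nu_i\ll\mathcal L$), builds a competitor map $\tilde T$ that interchanges the images of $A$ and $B'$, and contradicts optimality by comparing the two integral costs. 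You instead invoke $c$-cyclical monotonicity of the support of the optimal plan and apply the two-point inequality $\|x_1-y_1\|+\|x_2-y_2\|\le\|x_1-y_2\|+\|x_2-y_1\|$ at a pair $(x_1,y_1)\in A_{12}\times\supp\mu_2$, $(x_2,y_2)\in A_{21}\times\supp\mu_1$ lying in the cyclically monotone carrier; your mass-balance step $\nu(A_{21})=\tfrac12-p+\nu(A_{12})\ge\nu(A_{12})>0$ matches the paper's count $\nu(B)\ge\tfrac12+\delta-p$. What your route buys: you never need to carve out a set of measure exactly $\delta$ nor to exhibit a measurable swap map, so the absolute-continuity of $\nu_i$ (which the paper's proof quietly uses but the lemma statement does not list) is not needed for the exchange itself — only the existence of the optimal map/plan and the standard fact that optimal plans are concentrated on cyclically monotone sets. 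The cost is that you must justify that the monotone carrier meets $A_{12}\times\supp\mu_2$ and $A_{21}\times\supp\mu_1$, which you correctly note follows from these sets having positive plan measure. Both geometric estimates ($L_1+L_2$ below, the diameter-plus-gap bounds above) coincide with the paper's, so the conclusion $\nu(A_{12})=0$ follows as claimed.
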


\begin{proof}[Proof of Lemma~\ref{lem:unbalance}]
    Suppose there exists a set $A \subset \supp \nu_1$ with $\nu(A)= \delta >0$ and $T_\nu^\mu(A) \subset \supp \mu_2$. Then there must be a set $B \subset \supp \nu_2$ with $\nu(B) \geq \delta + 1 - p - \frac{1}{2} = \frac{1}{2} + \delta -p$ and $T_\nu^\mu(B) \subset \supp \mu_1$. Since $\nu_i \ll \mathcal{L} $, we can pick $B' \subset B$ such that $\nu(B')=\delta$. Then for any measurable $\tilde{T}$ such that $\tilde{T} (A) = T_\nu^\mu(B')$ and $\tilde{T} (B') = T_\nu^\mu(A)$, 
    $$\int_{A\cup B'} \|  \tilde{T}(x) - x\| \ dx\leq \delta (r_{\mu_1} + r_{\nu_1} + l_1) + \delta (r_{\mu_2} + r_{\nu_2} +  l_2) < \delta (L_1+L_2) \leq \int_{A\cup B'} \|  T_\nu^\mu(x) - x\| \ dx, $$ which contradicts the optimality of $T_\nu^\mu$.
    
\end{proof}

\begin{proof}[Proof of \Cref{thm:semi_discrete}]
Let $\bm{w}$ be any weight vector associated with $T_{\bar{\nu}}^{\hat{\mu}}$. We start with the observation that $(T_{\bar{\nu}}^{\hat{\mu}})_\# \bar{\nu}_1 = \hat{\mu}_1$ and $(T_{\bar{\nu}}^{\hat{\mu}})_\# \bar{\nu}_2 = \hat{\mu}_2$ is equivalent to the following two conditions:

(1) For $\forall x \in \bar{\nu}_1$, $\tilde{d}_{\bm{w}}(x, \hat{\mu}_1) \leq \tilde{d}_{\bm{w}}(x, \hat{\mu}_2)$.

(2) And for $\forall x \in \bar{\nu}_2$, $\tilde{d}_{\bm{w}}(x, \hat{\mu}_2) \leq \tilde{d}_{\bm{w}}(x, \hat{\mu}_1)$.

(1) requires any point from $\bar{\nu}_1$ to be assigned to some point in $\hat{\mu}_1$ and (2) requires any point from $\bar{\nu}_2$ to be assigned to some point in $\hat{\mu}_2$, i.e.
\begin{equation}
    \label{eq:set_ineq}
    \sup_{x \in \bar{\nu_1}}\tilde{d}_{\bm{w}}(x, \hat{\mu}_1) - \tilde{d}_{\bm{w}}(x, \hat{\mu}_2) \leq 0 \leq \inf_{x \in \bar{\nu_2}}\tilde{d}_{\bm{w}}(x, \hat{\mu}_1) - \tilde{d}_{\bm{w}}(x, \hat{\mu}_2).
\end{equation}

We rewrite \ref{eq:set_ineq} by unwarpping the definition of $\tilde{d}$ to get 
\begin{equation}
    \label{eq:point_ineq_1}
    \sup_{x \in \bar{\nu_1}} \left( \min_{y \in \hat{\mu}_1}\tilde{d}_{\bm{w}}(x, y)\right) - \left( \min_{z \in \hat{\mu}_2} \tilde{d}_{\bm{w}}(x, z) \right) \leq 0 \leq \inf_{x \in \bar{\nu_2}}\left( \min_{y \in \hat{\mu}_1}\tilde{d}_{\bm{w}}(x, y) \right) - \left( \min_{z \in \hat{\mu}_2}\tilde{d}_{\bm{w}}(x, z) \right),
\end{equation}
i.e. 
\begin{equation}
    \label{eq:point_ineq_2}
    \sup_{x \in \bar{\nu_1}} \max_{z \in \hat{\mu}_2}\min_{y \in \hat{\mu}_1}\tilde{d}_{\bm{w}}(x, y) - \tilde{d}_{\bm{w}}(x, z)  \leq 0 \leq \inf_{x \in \bar{\nu_2}} \max_{z \in \hat{\mu}_2}\min_{y \in \hat{\mu}_1}\tilde{d}_{\bm{w}}(x, y) - \tilde{d}_{\bm{w}}(x, z).
\end{equation}
\end{proof}

\begin{proof}[Proof of Corollary~\ref{corollary:semi_discrete}]

Observe that $\bm{w}_1 = \bm{m} + C$ and $\bm{w}_2 =\bm{l} + D$ are also weight vectors for $T_{\bar{\nu}_1}^{\hat{\mu}_1}$ and $T_{\bar{\nu}_2}^{\hat{\mu}_2}$ for any constants $C$ and $D$. 

Moreover, 
$$\sup_{x \in \bar{\nu_1}} \max_{z \in \hat{\mu}_2}\min_{y \in \hat{\mu}_1}\tilde{d}_{\bm{w}_1}(x, y) - \tilde{d}_{\bm{w}_2}(x, z) \leq \inf_{x \in \bar{\nu_2}} \max_{z \in \hat{\mu}_2}\min_{y \in \hat{\mu}_1}\tilde{d}_{\bm{w}_1}(x, y) - \tilde{d}_{\bm{w}_2}(x, z),$$
which is the same as 
\begin{equation}
    C-D + \sup_{x \in \bar{\nu_1}} \max_{z \in \hat{\mu}_2}\min_{y \in \hat{\mu}_1}\tilde{d}_{\bm{m}}(x, y) - \tilde{d}_{\bm{l}}(x, z) \leq C-D + \inf_{x \in \bar{\nu_2}} \max_{z \in \hat{\mu}_2}\min_{y \in \hat{\mu}_1}\tilde{d}_{\bm{m}}(x, y) - \tilde{d}_{\bm{l}}(x, z).
\end{equation}
Choosing the difference $C-D$ allows us to conclude $(T_{\bar{\nu}}^{\hat{\mu}})_\# \bar{\nu}_1 = \hat{\mu}_1$ and $(T_{\bar{\nu}}^{\hat{\mu}})_\# \bar{\nu}_2 = \hat{\mu}_2$ by setting $\bm{w} = [ \bm{w}_1, \bm{w}_2 ]$.

Conversely, let $\bm{w}$ be the corresponding weight vector of $T_{\bar{\nu}}^{\hat{\mu}}$ and assume $(T_{\bar{\nu}}^{\hat{\mu}})_\# \bar{\nu}_1 = \hat{\mu}_1$, $(T_{\bar{\nu}}^{\hat{\mu}})_\# \bar{\nu}_2 = \hat{\mu}_2$. Then $\bm{w}_1$ (or $\bm{w}_2$) differs from $\bm{m}$ (or $\bm{l}$) by some constant $C$ (or $D$) \cite[Theorem 2]{geiss2013optimally}.
By Theorem \ref{thm:semi_discrete}, 
$$\sup_{x \in \bar{\nu_1}} \max_{z \in \hat{\mu}_2}\min_{y \in \hat{\mu}_1}\tilde{d}_{\bm{w}_1}(x, y) - \tilde{d}_{\bm{w}_2}(x, z)  \leq 0 \leq \inf_{x \in \bar{\nu_2}} \max_{z \in \hat{\mu}_2}\min_{y \in \hat{\mu}_1}\tilde{d}_{\bm{w}_1}(x, y) - \tilde{d}_{\bm{w}_2}(x, z),$$
which implies 
$$C-D + \sup_{x \in \bar{\nu_1}} \max_{z \in \hat{\mu}_2}\min_{y \in \hat{\mu}_1}\tilde{d}_{\bm{m}}(x, y) - \tilde{d}_{\bm{l}}(x, z)  \leq 0 \leq C-D +\inf_{x \in \bar{\nu_2}} \max_{z \in \hat{\mu}_2}\min_{y \in \hat{\mu}_1}\tilde{d}_{\bm{m}}(x, y) - \tilde{d}_{\bm{l}}(x, z),$$
i.e.
$$ \sup_{x \in \bar{\nu_1}} \max_{z \in \hat{\mu}_2}\min_{y \in \hat{\mu}_1}\tilde{d}_{\bm{m}}(x, y) - \tilde{d}_{\bm{l}}(x, z)  \leq D-C \leq \inf_{x \in \bar{\nu_2}} \max_{z \in \hat{\mu}_2}\min_{y \in \hat{\mu}_1}\tilde{d}_{\bm{m}}(x, y) - \tilde{d}_{\bm{l}}(x, z).$$

\end{proof}

This proposition shows how the classification accuracy improves with samples conditioned on high confidence scores $\Delta w$.
\begin{theorem}
    Let $\nu_1, \nu_2$ be the continuous probability measures with means $m_1$ and $m_2$, respectively and $\hat{\mu}_i$ consists of singletons $y_i$. Denote $\nu:=\frac{1}{2}\nu_1+\frac{1}{2}\nu_2$ and $\hat{\mu}:=\frac{1}{2}\hat\mu_1+\frac{1}{2}\hat\mu_2$. Suppose $\nu_i(|X_i - m_i| \geq t) \leq 2 \exp\left(-\frac{t^2}{2\sigma^2}\right)$ and $\|m_1-y_1\| + \|m_2-y_2\| < \|m_1-y_2\| + \|m_2-y_1\| $, then $P\Big( T_\nu^{\hat \mu}(X_i) \neq Y_i|g(X_i)>\Delta w\Big) \leq 2 \exp\left(-\frac{\min_{i=1,2}\dist(m_i,\cS)^2}{2\sigma^2}\right)$, where 
    
    (1) $\cS: = \Bigl\{  
x: \|x-y_1\| - (w^*+\Delta w) =   \|x-y_{2}\| \Bigr\}$ 

(2) $d:=\|y_2-y_1\|,\qquad 
  e:=\frac{y_2-y_1}{d},\qquad$ $m=\alpha e+u,\;u\perp e$ is the orthogal decomposition of $m$ and denote $\rho:=\|u\|$.
  
(3)$\dist(m,\cS)=\min_{r\ge0}\sqrt{\bigl(t(r)-\alpha\bigr)^{2}+\bigl(r-\rho\bigr)^{2}}$ where  $t(r)$ is defined through $$\sqrt{t^{2}+r^{2}}
   \;=\;
     \sqrt{(t-d)^{2}+r^{2}}+(w^*+\Delta w),
   \qquad r\ge0.$$
 
\end{theorem}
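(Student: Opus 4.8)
The estimate is a subgaussian tail bound whose content is geometric: once the semi-discrete OT map is unrolled, both the misclassification region and the confidence set are governed by hyperboloids with foci $y_1,y_2$, and the bound just measures the $\nu_i$-mass lying past such a hyperboloid at distance $\dist(m_i,\cS)$ from the mean. I would first normalize coordinates, placing $y_1$ at the origin and $y_2=d\,e$ with $d=\|y_2-y_1\|$, $e=(y_2-y_1)/d$, and writing $m_i=\alpha_i e+u_i$ with $u_i\perp e$, $\rho_i=\|u_i\|$, as in item (2). Since the cost is $\|\cdot\|$ (the case $p=1$ of Section~\ref{sec:intro_sdot}), the map assigns $x$ to the prototype minimizing $\tilde d_{\bm w}(x,y_j)=\|x-y_j\|-w_j$; when $\hat\mu$ places mass $\tfrac12$ on each prototype, the optimal dual weights are unique up to a common additive constant, so the gap $w^*:=w_1-w_2$ is well defined and class $1$'s Laguerre cell is $\{x:\|x-y_1\|-\|x-y_2\|\le w^*\}$. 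The hypothesis $\|m_1-y_1\|+\|m_2-y_2\|<\|m_1-y_2\|+\|m_2-y_1\|$ rearranges to $\|m_1-y_1\|-\|m_1-y_2\|<\|m_2-y_1\|-\|m_2-y_2\|$, which (in the spirit of Lemma~\ref{lem:unbalance}) I would use to place each $m_i$ strictly on the class-$i$ side of both the Laguerre boundary and of $\cS$.

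The crux is to localize the bad event. Writing $g(x)=\tilde d_{\bm w}(x,y_1)-\tilde d_{\bm w}(x,y_2)=\|x-y_1\|-\|x-y_2\|-w^*$, I would show that, up to a null set, $\{T_\nu^{\hat\mu}(X_i)\neq Y_i\}\cap\{g(X_i)>\Delta w\}$ lies in the component of $\R^d\setminus\cS$ not containing $m_i$: misclassification pushes $X_i$ across the Laguerre boundary away from $y_i$, while the confidence constraint pushes it across the shifted surface $\cS=\{x:\|x-y_1\|-(w^*+\Delta w)=\|x-y_2\|\}$, so together they force $X_i$ beyond $\cS$. Hence this event is contained in $\{\|X_i-m_i\|\ge\dist(m_i,\cS)\}$, and after normalizing by $\nu_i(g(X_i)>\Delta w)$ — whose complement $\{g(X_i)\le\Delta w\}$ is itself a deep-tail event at distance $\dist(m_i,\cS)$, hence negligible — the conditional probability inherits the same bound.

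It remains to evaluate $\dist(m_i,\cS)$, which produces item (3). In the coordinates above, a point of $\cS$ at perpendicular radius $r\ge0$ from the $e$-axis has axial coordinate $t(r)$ determined by $\sqrt{t^2+r^2}=\sqrt{(t-d)^2+r^2}+(w^*+\Delta w)$; the closest point of the corresponding radius-$r$ sphere to $m_i$ is found by aligning its perpendicular component with $u_i$, at distance $\sqrt{(t(r)-\alpha_i)^2+(r-\rho_i)^2}$. Minimizing over $r\ge0$ gives $\dist(m_i,\cS)=\min_{r\ge0}\sqrt{(t(r)-\alpha_i)^2+(r-\rho_i)^2}$, exactly the formula claimed.

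Finally, the subgaussian hypothesis applied at $t=\dist(m_i,\cS)$ gives $\nu_i(\|X_i-m_i\|\ge\dist(m_i,\cS))\le 2\exp(-\dist(m_i,\cS)^2/2\sigma^2)$; combining with the localization step and taking the larger exponent over $i\in\{1,2\}$ yields the stated bound with $\min_{i=1,2}\dist(m_i,\cS)$ in the exponent. I expect the localization step to be the main obstacle: carefully verifying, from the exact sign convention of $g$ and of $\Delta w$, that ``misclassified yet confident'' lands strictly beyond the \emph{shifted} hyperboloid $\cS$ (and not merely beyond the unshifted Laguerre boundary), together with the normalization by $\nu_i(g(X_i)>\Delta w)$, is where the real work lies; the coordinate reduction and the one-dimensional minimization are then routine.
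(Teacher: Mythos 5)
Your proposal follows essentially the same route as the paper's proof: define the shifted surface $\cS$ from the dual weight $w^*+\Delta w$, pass to the $(t,r)$ coordinates along $e=(y_2-y_1)/d$, characterize $\cS$ through the unique solution $t(r)$ of $\sqrt{t^2+r^2}=\sqrt{(t-d)^2+r^2}+(w^*+\Delta w)$, minimize $\sqrt{(t(r)-\alpha)^2+(r-\rho)^2}$ over $r\ge 0$ after aligning the perpendicular component with $u$, and apply the subgaussian tail at $\dist(m_i,\cS)$. The one place you diverge is the conditioning denominator—your claim that $\{g(X_i)\le\Delta w\}$ is itself a deep-tail event at distance $\dist(m_i,\cS)$ is not accurate in general (that set is a band around the Laguerre boundary whose near edge is closer to $m_i$ than $\cS$)—but the paper sidesteps this point entirely by bounding only the $\nu_1$-mass of the shifted cell $\mathbb{L}_{\bm w}(y_1)$, so your argument is otherwise the same as, and no weaker than, the paper's.
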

\begin{proof}

Let $w^*$ be the dual weight corresponding to $T_\nu^{\hat\mu}$ and let $w:=w+\Delta w$. Denote $\mathbb{L}_{\bm w}(y_1) := \Bigl\{  
x: \|x-y_1\| - w \leq   \|x-y_{2}\| \Bigr\}$ and similarly for $\mathbb{L}_{\bm w}(y_2)$. 

Define $\cS: = \Bigl\{  
x: \|x-y_1\| - w =   \|x-y_{2}\| \Bigr\}$. Without loss of generality, we assume $y_1=\mathbf{0}$. For an arbitrary point $m\in\mathbb R^{n}$, write the orthogonal
decomposition
\[
  d:=\|y_2\|,\qquad 
  e:=\frac{y_2}{d},\qquad
  m=\alpha e+u,\;u\perp e,\; \rho:=\|u\|.
\]

For every $x$ write
\[
  x=t\,e+v,\qquad t\in\mathbb R,\;v\perp e,\;r:=\|v\|.
\]
Under this decomposition
\[
  \|x\|          =\sqrt{t^{2}+r^{2}},\qquad
  \|x-y_2\|=\sqrt{(t-d)^{2}+r^{2}}.
\]
Hence $x\in\cS$ iff
\begin{equation}\label{eq:t_r_equation}
     \sqrt{t^{2}+r^{2}}
   \;=\;
     \sqrt{(t-d)^{2}+r^{2}}+w,
   \qquad r\ge0.
\end{equation}
Since for any fixed $r$, $\sqrt{t^{2}+r^{2}} -  \sqrt{(t-d)^{2}+r^{2}}$ is strictly increasing, solution to equation \ref{eq:t_r_equation} is unique and we denote it by $t(r)$.

The squared distance between $x=t\,e+v$ and $m$ is
\[
 \|x-m\|^{2}=(t-\alpha)^{2}+\|v-u\|^{2}
            =(t-\alpha)^{2}+r^{2}+\rho^{2}-2r\rho\cos\theta,
\]
where $\theta$ is the angle between $v$ and $u$.
For fixed $(t,r)$ this expression is minimized when $\theta=0$, i.e.
$v$ is chosen to be colinear with $u$.  Without loss of
generality set
$v=(r/\rho)\,u$ when $\rho\neq0$.

The minimal squared distance at any given $(t,r)$ is therefore $(t-\alpha)^{2}+(r-\rho)^{2}$. Since $t=t(r)$ is uniquely determined by $r$, the distance optimization reduces to 
\[
  \dist(m,\cS)=\min_{r\ge0}\sqrt{\bigl(t(r)-\alpha\bigr)^{2}+\bigl(r-\rho\bigr)^{2}}.
\]
By a direct derivative analysis, the minimizer for $dist(m, \cS)$ is unique.

Therefore, take $m=m_1$, we have $\nu_1(\mathbb{L}_{\bm w}(y_1))\geq 1-2 \exp\left(-\frac{\dist(m_1,\cS)^2}{2\sigma^2}\right)$.

\end{proof}

\section{Experiment details}
\label{sec:experiments}

\subsection{Synthetic Data}

In this section, we use synthetic data to validate and illustrate our theoretical findings. Specifically, we consider a 2D scenario where data points are sampled from circular regions. The source domain consists of class-separated samples drawn from disjoint circles, whereas the target domain includes clusters with partial overlap. The distribution of the generated data is visualized in Figure~\ref{fig:overlap_samples}. We compute the max-min values as described in Theorem \ref{thm:semi_discrete} and present the results in Figure \ref{fig:overlap_true}. As shown in Figure~\ref{fig:overlap_predict}, many of the generated pseudo labels within the overlapping region are misclassified. However, after removing low-confidence predictions, the remaining samples are almost entirely classified correctly, as illustrated in Figure~\ref{fig:overlap_pred_quantil_add}. Notably, the separation between the two clusters becomes significantly more obvious after this filtering step. 

\begin{figure}[H]
    \centering
    \subfigure[Sample distributions]{%
        \includegraphics[width=0.47\textwidth]{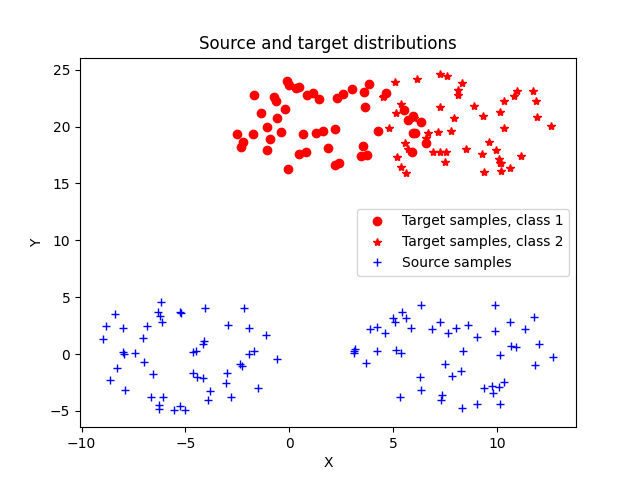}
        \label{fig:overlap_samples}
    }
    \subfigure[Sample with g values]{%
        \includegraphics[width=0.47\textwidth]{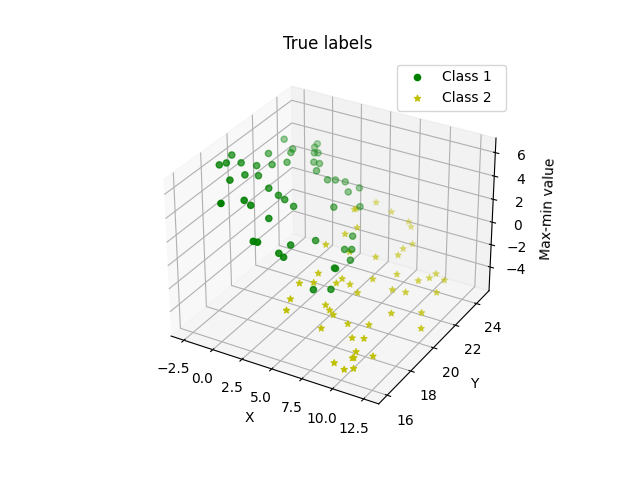}
        \label{fig:overlap_true}
    }
    \subfigure[Value gap = 0.09]{%
        \includegraphics[width=0.47\textwidth]{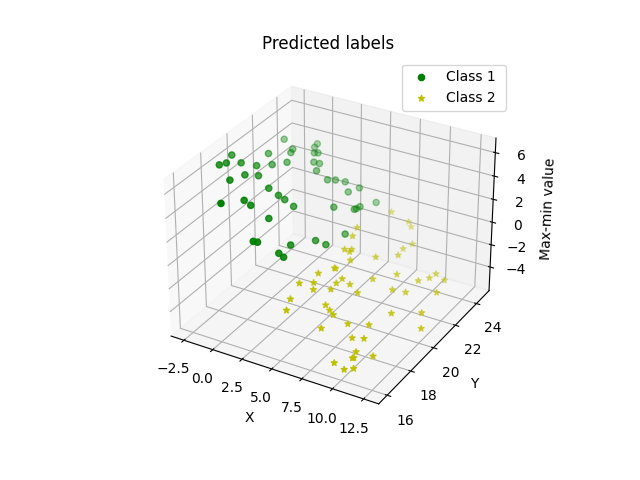}
        \label{fig:overlap_predict}
    }
    \subfigure[Value gap = 2.21]{%
        \includegraphics[width=0.47\textwidth]{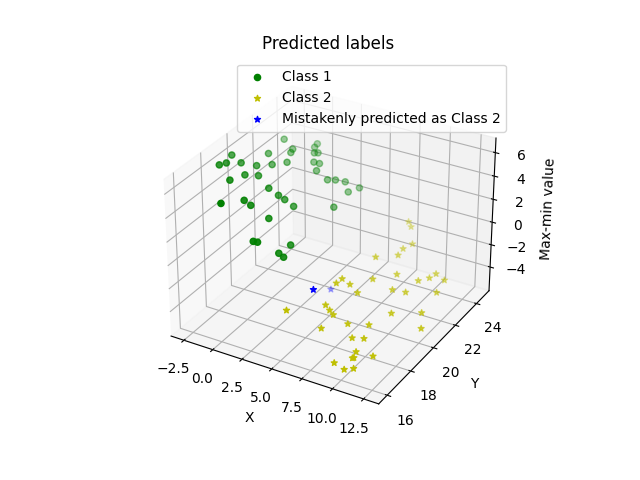}
        \label{fig:overlap_pred_quantil_add}
    }
    \caption{OT score performance on overlapping distributions.}
    \label{fig:overlap_comparison}
\end{figure}

\subsection{Real-World Datasets}
To ensure a fair comparison, we following the training setting of \cite{lee2022confidence}. In our main experiments, we compare OT score with other confidence scores including Maxprob, Ent, Cossim, and JMDS. The details for other confidence scores are presented in Appendix \ref{app:confidence_scores}.  We compare the performance of confidence scores on four standard UDA benchmarks: ImageCLEFDA, Office-Home, and VisDA-2017. All code can be efficiently executed on a single NVIDIA RTX 4070 GPU without requiring specialized hardware. For ImageCLEFDA, Office-Home datasets, we use ResNet-50 backbone pretrained on the ImageNet as a base network. The source model is trained for 50 epochs. For VisDA-2017, we use ResNet-101 for GMM pseudo labeling and ResNet-50 for DSAN pseudo labeling. The source model is obtained by finetuning a pretrained network on the source domain for 10 epochs. We use SGD optimizer with the momentum term set to be 0.9. We set lr=1e-4 for the base network and lr=1e-3 for the classifier layer. For digit recognition tasks, we use the ResNet-18 network as the base model. The network is initialized with random weights. We finetune this network on source domains using lr=1e-4, epochs=50, momentum=0.9, decay=1e-4. For OT score computation, we fix the entropic regularization parameter $\varepsilon$ to be 0.0001.

Pseudo-label generation via DSAN: 
To obtain pseudo labels, we need to further train the neural network using the DSAN algorithm with the following settings: 
number of training epochs = 20, 
\texttt{transfer\_loss\_weight} = 0.5, 
\texttt{transfer\_loss} = LMMD, 
learning rate = 0.01, 
weight decay = $5\times10^{-4}$, 
momentum = 0.9. 
\texttt{lr\_scheduler} is enabled with 
\texttt{lr\_gamma} = 0.0003, 
\texttt{lr\_decay} = 0.75.  comparison for DSAN generated pseudo labels are provided in Table \ref{tab:aurc_DSAN_complete}.

We report mean ± standard deviation over three independent runs (random seeds) in Table~\ref{tab:officehome_sfuda_compact_std} for Office-Home and Table~\ref{tab:visda_summary} for VisDA-2017.

\begin{table}[H]
\centering
\caption{Evaluation of confidence scores based on AURC (DSAN).}
\label{tab:aurc_DSAN_complete}
\begin{tabular}{llccccc}
\toprule
\textbf{Dataset} & \textbf{Task} & Maxprob & Ent & Cossim & JMDS & OT Score \\
\midrule
\multirow{6}{*}{ImageCLEF-DA}
& C $\rightarrow$ I & 0.0301 & 0.0318 & 0.0506 & 0.0258 & \textbf{0.0240} \\
& C $\rightarrow$ P & 0.2024 & 0.2040 & 0.1913 & 0.1391 & \textbf{0.1331} \\
& I $\rightarrow$ C & 0.0090 & 0.0109 & \textbf{0.0084} & 0.0105 & 0.0090 \\
& I $\rightarrow$ P & 0.1135 & 0.1120 & 0.1607 & 0.1223 & \textbf{0.1119} \\
& P $\rightarrow$ C & 0.0102 & 0.0121 & \textbf{0.0075} & 0.0096 & 0.0097 \\
& P $\rightarrow$ I & 0.0136 & 0.0150 & 0.0186 & 0.0140 & \textbf{0.0135} \\
& \textbf{Avg.} & 0.0631 & 0.0643 & 0.0729 & 0.536 & 0.0502 \\
\midrule
\multirow{12}{*}{Office-Home}
& Ar $\rightarrow$ Cl & 0.4306 & 0.4284 & 0.4170 & 0.4515 & \textbf{0.3403} \\
& Ar $\rightarrow$ Pr & 0.2745 & 0.2738 & 0.2512 & 0.2849 & \textbf{0.2133} \\
& Ar $\rightarrow$ Rw & 0.1469 & 0.1493 & 0.1521 & 0.1860 & \textbf{0.1157} \\
& Cl $\rightarrow$ Ar & 0.2600 & 0.2631 & 0.2340 & 0.3228 & \textbf{0.2097} \\
& Cl $\rightarrow$ Pr & 0.1757 & 0.1777 & 0.1612 & 0.2225 & \textbf{0.1503} \\
& Cl $\rightarrow$ Rw & 0.1834 & 0.1848 & 0.1865 & 0.2246 & \textbf{0.1493} \\
& Pr $\rightarrow$ Ar & 0.2371 & 0.2381 & 0.2245 & 0.2776 & \textbf{0.1984} \\
& Pr $\rightarrow$ Cl & 0.3139 & 0.3105 & 0.3149 & 0.3302 & \textbf{0.2711} \\
& Pr $\rightarrow$ Rw & 0.0974 & 0.0992 & 0.1037 & 0.1250 & \textbf{0.0817} \\
& Rw $\rightarrow$ Ar & 0.1301 & 0.1318 & 0.1268 & 0.1751 & \textbf{0.1023} \\
& Rw $\rightarrow$ Cl & 0.2581 & 0.2555 & 0.2641 & 0.2718 & \textbf{0.2112} \\
& Rw $\rightarrow$ Pr & 0.0681 & 0.0684 & 0.0628 & 0.1026 & \textbf{0.0561} \\
& \textbf{Avg.} & 0.2146 & 0.2150 & 0.2082 & 0.2478 & \textbf{0.1749} \\
\midrule
VisDA-2017 & T $\rightarrow$ V & 0.2301 & 0.2290 & 0.2289 & 0.2296 & \textbf{0.1799} \\
\bottomrule
\end{tabular}
\end{table}

\begin{table}[t]
\centering
\setlength{\tabcolsep}{3pt}
\renewcommand{\arraystretch}{1.1}
\begin{tabular}{lccccccc}
\toprule
Method & Ar$\to$Cl & Ar$\to$Pr & Ar$\to$Rw & Cl$\to$Ar & Cl$\to$Pr & Cl$\to$Rw & \\[-2pt]
\midrule
OTScore & $58.0\pm0.6$ & $79.6\pm0.1$ & $81.5\pm0.1$ & $69.6\pm0.4$ & $80.2\pm0.8$ & $80.0\pm0.2$ & \\[2pt]
\midrule
Method & Pr$\to$Ar & Pr$\to$Cl & Pr$\to$Rw & Rw$\to$Ar & Rw$\to$Cl & Rw$\to$Pr & Avg \\
\midrule
OTScore & $68.3\pm0.5$ & $57.6\pm0.7$ & $82.3\pm0.4$ & $73.2\pm0.1$ & $61.1\pm0.9$ & $84.7\pm0.4$ & $73.0\pm0.1$\\
\bottomrule
\end{tabular}
\caption{Accuracy (\%) on Office-Home (ResNet-50).}
\label{tab:officehome_sfuda_compact_std}
\end{table}

\begin{table}[t]
\centering
\small
\setlength{\tabcolsep}{6pt}
\begin{tabular}{lcc}
\toprule
\textbf{Dataset} & \textbf{Mean Accuracy} & \textbf{Classwise Mean Accuracy} \\
\midrule
VisDA-2017 & \(85.0 \pm 0.3\) & \(87.8 \pm 0.1\) \\
\bottomrule
\end{tabular}
\caption{Accuracy (\%) on VisDA-2017.}
\label{tab:visda_summary}
\end{table}

\subsection{Additional Ablation Studies}

Continuing from Section~\ref{sec:ablation}, we further report the average entropy and the Top1--Top2 probability gap of the assignment distribution throughout training. These two diagnostics provide a more direct view of how the soft assignments behave as $\varepsilon$ varies. In particular, the entropy quantifies the overall sharpness of the assignment distribution, while the Top1--Top2 probability gap measures the separation between the most likely and second most likely assignments for each sample.

Our results in Figure~\ref{fig:ablation} show that as $\varepsilon$ decreases, the average entropy rapidly drops and the Top1--Top2 probability gap correspondingly increases, indicating increasingly sharp assignments. Moreover, for $\varepsilon \leq 10^{-2}$, both quantities quickly saturate, suggesting that the assignments are already nearly one-hot and thus effectively in the hard-assignment regime. Consequently, further reducing $\varepsilon$ does not lead to substantially different optimization behavior in practice. 

\begin{figure}[t]
    \centering
    \begin{minipage}{0.48\linewidth}
        \centering
        \includegraphics[width=\linewidth]{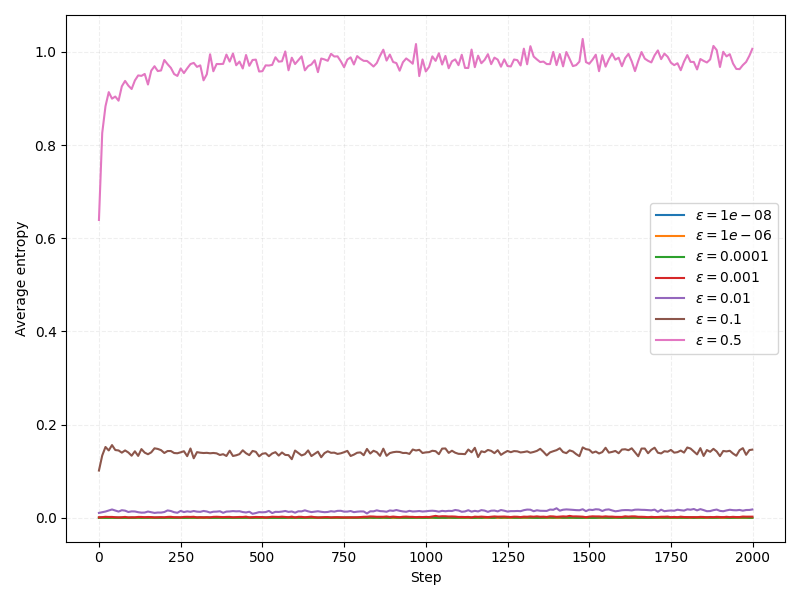}
        \caption*{(a) Average entropy}
    \end{minipage}\hfill
    \begin{minipage}{0.48\linewidth}
        \centering
        \includegraphics[width=\linewidth]{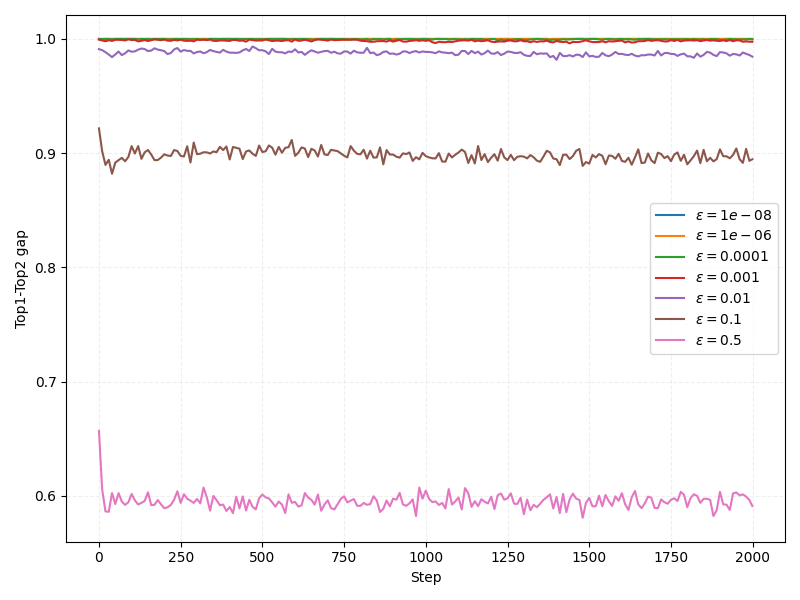}
        \caption*{(b) Top1--Top2 probability gap}
    \end{minipage}
    \caption{Additional diagnostics for the assignment distribution under different values of $\varepsilon$.}
    \label{fig:ablation}
\end{figure}
\section{Unbalanced Classes}
\label{app:unbalanced_classes}
\begin{theorem}
    \label{cor:optimal_weight}
    With the same notations of ~\ref{lem:unbalance}, suppose $\mu = p^* \mu_1 + (1-p^*) \mu_2$ for some $p^* \in (0,1)$. If $L_i\geq l_i + r_{\nu_1}+r_{\nu_2}+r_{\mu_1}+r_{\mu_2}$ then $\argmin_{p \in [0,1]} W_1(\mu, \nu) = p^*$, where $\nu :=  p\nu_1 + (1-p)\nu_2$ for some $p\in (0, 1)$.
\end{theorem}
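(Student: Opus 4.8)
The plan is to write $\Phi(p):=W_1(\mu,\nu_p)$ with $\nu_p:=p\nu_1+(1-p)\nu_2$, prove $\Phi(p)\ge\Phi(p^*)$ for every $p\in[0,1]$, and along the way record that $\Phi(p^*)=p^*W_1(\mu_1,\nu_1)+(1-p^*)W_1(\mu_2,\nu_2)$. I may assume the supports are compact (so all inter-cluster distances are attained and positive). By the relabeling $(\mu_i,\nu_i,p,p^*)\mapsto(\mu_{3-i},\nu_{3-i},1-p,1-p^*)$, which leaves $\mu$, the hypothesis, and $\Phi$ invariant, it suffices to treat $p\ge p^*$.

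The engine is a ``no wasteful crossing'' statement, proved exactly as Lemma~\ref{lem:unbalance}: for $p\ge p^*$, any optimal coupling $\gamma$ of $\mu$ and $\nu_p$ puts no mass on $\operatorname{supp}\mu_1\times\operatorname{supp}\nu_2$. Indeed, if a chunk of mass $s>0$ were sent from $\operatorname{supp}\mu_1$ into $\operatorname{supp}\nu_2$, marginal bookkeeping forces at least $s$ mass from $\operatorname{supp}\mu_2$ into $\operatorname{supp}\nu_1$; re-gluing these two chunks into matched pairs changes the cost by at most $s\big[(r_{\mu_1}+l_1+r_{\nu_1})+(r_{\mu_2}+l_2+r_{\nu_2})-L_1-L_2\big]<0$, since the hypothesis gives $L_1+L_2\ge l_1+l_2+2(r_{\mu_1}+r_{\mu_2}+r_{\nu_1}+r_{\nu_2})$. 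Hence $\gamma=\gamma_{11}+\gamma_{21}+\gamma_{22}$, where $\gamma_{11}$ couples $p^*\mu_1$ with a submeasure $\kappa_1\le p\nu_1$ of mass $p^*$, $\gamma_{21}$ couples an $\alpha\le(1-p^*)\mu_2$ of mass $p-p^*$ with $p\nu_1-\kappa_1$, and $\gamma_{22}$ couples $(1-p^*)\mu_2-\alpha$ with $(1-p)\nu_2$. At $p=p^*$ the middle block vanishes, $\gamma$ is a gluing of optimal plans for the two clusters, and $\Phi(p^*)$ takes the claimed value.

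For $p>p^*$ I modify the optimal $\gamma$ into a coupling of $\mu$ and $\nu_{p^*}$: keep $\gamma_{11}$ and $\gamma_{22}$, re-route the source $\alpha$ of $\gamma_{21}$ onto the extra cluster-$2$ target $(p-p^*)\nu_2$ (so cluster $2$ now receives exactly $(1-p^*)\nu_2$), and then compose with transporting $\kappa_1$ onto $p^*\nu_1$ inside $\operatorname{supp}\nu_1$. The re-route costs at most $(p-p^*)(r_{\mu_2}+l_2+r_{\nu_2})$, whereas $\gamma_{21}$ cost at least $(p-p^*)L_2$. The subtle step is the last transport: the crude bound $W_1(\kappa_1,p^*\nu_1)\le p^*r_{\nu_1}$ is useless near $p^*$, but since $\kappa_1$ and $p^*\nu_1$ are both dominated by $p\nu_1$ and both have mass $p^*$, they agree on at least $2p^*-p$ of mass, hence differ on at most $p-p^*$, giving $W_1(\kappa_1,p^*\nu_1)\le(p-p^*)r_{\nu_1}$. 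Combining the estimates with $L_2\ge l_2+(r_{\mu_1}+r_{\mu_2}+r_{\nu_1}+r_{\nu_2})$ and cancelling,
\[
\Phi(p^*)\ \le\ \Phi(p)-(p-p^*)\,r_{\mu_1}\ \le\ \Phi(p),
\]
and the relabeling disposes of $p<p^*$.

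I expect the main obstacle to be precisely the control of the realignment term $W_1(\kappa_1,p^*\nu_1)$ together with the surrounding measure surgery: one must argue carefully that the $y$-marginal $\kappa_1$ of $\gamma_{11}$ cannot deviate from the natural copy $p^*\nu_1$ by more than the mass imbalance $p-p^*$, which hinges on the domination $\kappa_1\le p\nu_1$ furnished by the no-wasteful-crossing structure, and that the re-routing of $\alpha$ and the excision/recombination of submeasures respect all marginals. A secondary issue is strictness: to upgrade $p^*\in\operatorname{argmin}$ to $\operatorname{argmin}=\{p^*\}$, one tracks that $\gamma_{21}$'s cost strictly exceeds $(p-p^*)L_2$ unless $\operatorname{supp}\mu_2$ and $\operatorname{supp}\nu_1$ lie at constant distance, a degeneracy excluded when the hypothesis holds strictly.
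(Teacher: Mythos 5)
Your argument is correct and follows essentially the same route as the paper's proof: both exploit the no-crossing structure of Lemma~\ref{lem:unbalance} for the unbalanced optimal plan and then perform surgery on that plan, trading the crossing mass $p-p^*$ (which costs at least $L_2$) for a within-cluster re-route costing at most $l_2+r_{\mu_2}+r_{\nu_2}$ plus an $r_{\nu_1}$ realignment of the $\nu_1$-marginal, with the hypothesis $L_2\ge l_2+r_{\nu_1}+r_{\nu_2}+r_{\mu_1}+r_{\mu_2}$ closing the gap. Two cosmetic caveats: with the non-strict hypothesis your swap argument only shows that \emph{some} optimal coupling has no crossing (which is all you need, so state it that way), and, like the paper's own proof, you establish that $p^*$ is a minimizer rather than the unique one.
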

\begin{proof}
    W.L.O.G we assume $p^*=\frac{1}{2}$. Let $T$ denote an OT map between $\frac{1}{2}\nu_1+\frac{1}{2}\nu_2$ and $\frac{1}{2}\mu_1+\frac{1}{2}\mu_2$. Suppose $\nu = (\frac{1}{2}+\delta)\nu_1 +  (\frac{1}{2}-\delta)\nu_2$. Let $ F_1$ be the set such that $F_1 \subset \supp \nu_1$ and $\nu_1(F_1)=\frac{2\delta}{1+2\delta}$ so that $(\frac{1}{2}+\delta)\nu_1(F_1^C)=\frac{1}{2}$. Let $F_2 \subset \supp \mu_2$ be defined as $F_2 := T_\nu^\mu(F_1)$. This can be done due to Lemma \ref{lem:unbalance}. 
    Given Lemma \ref{lem:unbalance}, it suffices to show the following inequality:
    \begin{align*}
        &\int_{F_1} \|T_\nu^\mu(x) -x \| d((\frac{1}{2}+\delta)\nu_1)+ \int_{F_1^C} \|T_\nu^\mu(x) -x \| d((\frac{1}{2}+\delta)\nu_1) + \int_{\supp \nu_2} \|T_\nu^\mu(x) -x \| d((\frac{1}{2}-\delta)\nu_2)\\
        &\geq 
        W_1(\frac{1}{2}\nu_1, \frac{1}{2}\mu_1)+
         W_1(\frac{1}{2}\nu_2, \frac{1}{2}\mu_2).
    \end{align*}
    Denote $\bar\mu_2 := (T_\nu^\mu)_\# ((\frac{1}{2}-\delta)\nu_2) $. We can decompose $W_1(\frac{1}{2}\nu_2, \frac{1}{2}\mu_2) = a+b $
    where $a$ corresponds to the cost on the source probability mass that forms $\bar \mu_2$ and $b$ corresponds to the cost on the rest of source probability mass. We denote the source marginal corresponding to $a$ as $\frac{1}{2}\tilde{\nu}_2$.
    Then it remains to show 
    \begin{align*}
        &\int_{F_1} \|T_\nu^\mu(x) -x \| d((\frac{1}{2}+\delta)\nu_1) - b\\
        &\geq W_1(\frac{1}{2}\nu_1, \frac{1}{2}\mu_1) - \int_{F_1^C} \|T_\nu^\mu(x) -x \| d((\frac{1}{2}+\delta)\nu_1)\\
        &  + a - \int_{\supp \nu_2} \|T_\nu^\mu(x) -x \| d((\frac{1}{2}-\delta)\nu_2)
    \end{align*}
    Note $\int_{F_1^C} \|T_\nu^\mu(x) -x \| d((\frac{1}{2}+\delta)\nu_1)$ achieves the optimal transport between $(\frac{1}{2}+\delta)\nu_1$ restricted on $F_1^C$ and $\frac{1}{2}\mu_1$.  Also, $\int_{\supp \nu_2} \|T_\nu^\mu(x) -x \| d((\frac{1}{2}-\delta)\nu_2)$ achieves the optimal transport between $(\frac{1}{2}-\delta)\nu_2$ and $\bar \mu_2$. 
    By triangle inequality properties of $W_1$ distance, it suffices to show 
    $$ LHS \geq W_1(\frac{1}{2}\nu_1, (\frac{1}{2}+\delta)\nu_1\vert_{F_1^C})
        +
        W_1(\frac{1}{2}\tilde{\nu}_2, (\frac{1}{2}-\delta)\nu_2). $$

    Since 
    \begin{align*}
        RHS \leq \delta r_{\nu_1} + \delta r_{\nu_2} 
        \leq LHS,
    \end{align*}
    the optimality is proved.
\end{proof}

    
We verify Theorem~\ref{cor:optimal_weight} with synthetic data generated within two circular clusters. We compute (discrete) OT plans under unbalanced cluster settings; see Figure~\ref{fig:unbalanced_cluster_plain} and Figure~\ref{fig:unbalanced_cluster}. In this experiment, we generate two equally sized clusters for the target samples, while the corresponding source clusters are assigned proportions of $0.2$ and $0.8$, respectively. As shown in the results, the optimal transport cost is minimized when the reweighting factor is correctly set to $p = 0.2$. This observation supports our claim that optimizing the reweighting factor can effectively mitigate class imbalance in optimal transport–based domain adaptation. However, this finding has not yet been validated on real-world datasets, where the underlying distributions are significantly more complex. We leave this investigation for future work.
\begin{figure}[H]
    \centering    \includegraphics[width=0.5\textwidth]{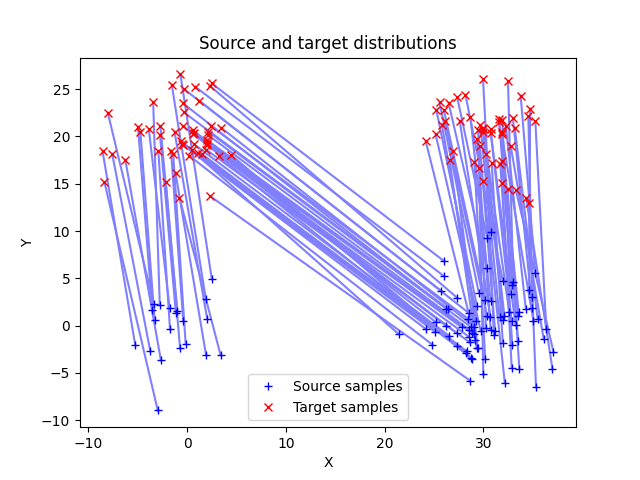}
    \caption{Unbalanced clusters with p=0.5}
    \label{fig:unbalanced_cluster_plain}
\end{figure}
\begin{figure}
    \centering    \includegraphics[width=0.66\textwidth]{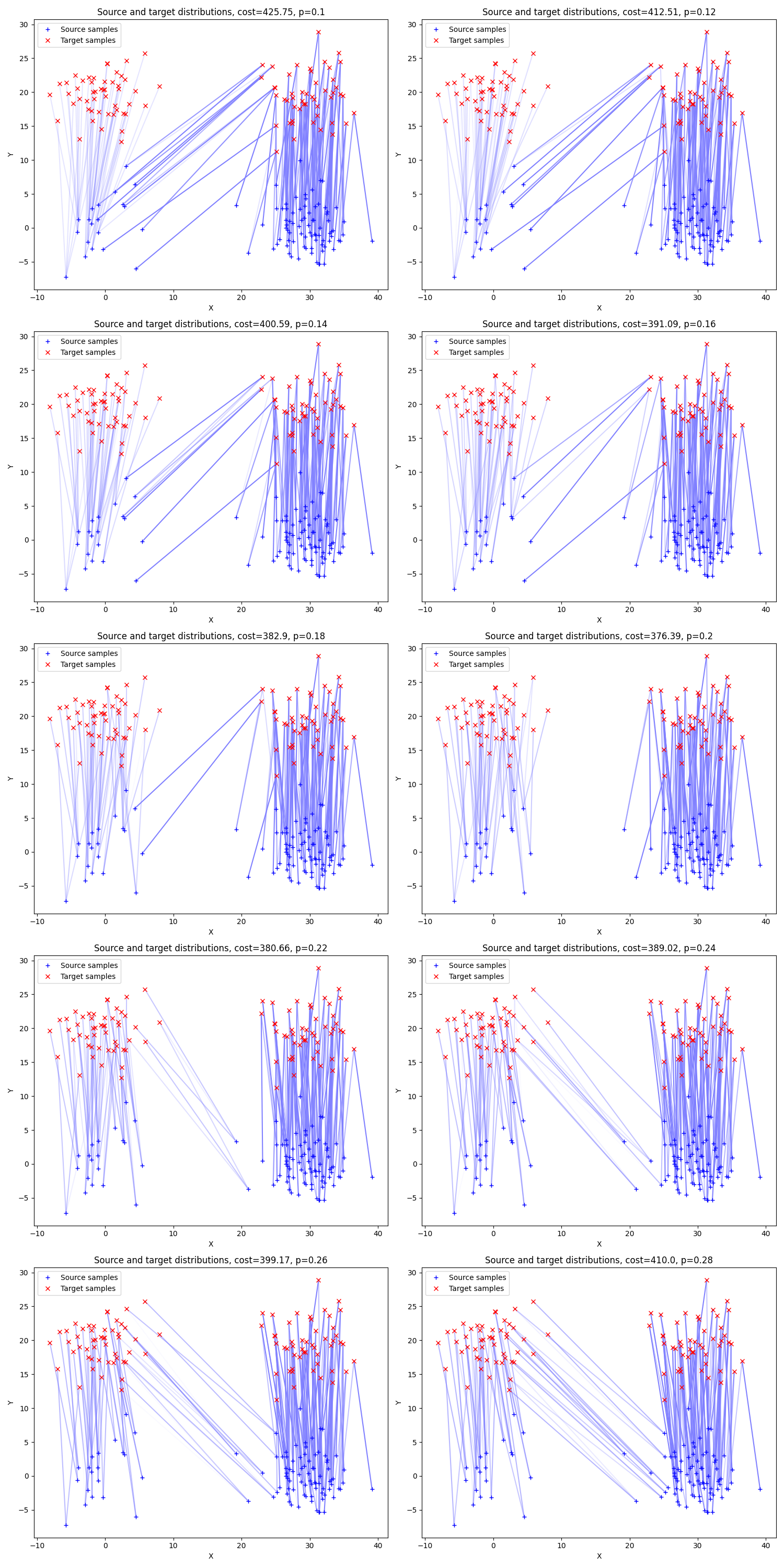}
    \caption{Unbalanced clusters}
    \label{fig:unbalanced_cluster}
\end{figure}

\end{document}